\documentclass{article}

\usepackage[final]{neurips_2025}

\usepackage{graphicx} 
\usepackage[utf8]{inputenc} 
\usepackage[T1]{fontenc}    
\usepackage{hyperref}       
\usepackage{url}            
\usepackage{booktabs}       
\usepackage{amsfonts}       
\usepackage{nicefrac}       
\usepackage{microtype}      
\usepackage{xcolor}         
\usepackage{amsmath}
\usepackage{mathtools}
\usepackage{bm}
\usepackage{amssymb}
\usepackage{amsthm}
\usepackage{dsfont}
\usepackage{cleveref}
\usepackage{enumitem}
\usepackage{wrapfig}
\usepackage{lipsum}
\usepackage{pifont}
\usepackage[ruled,vlined]{algorithm2e}
\usepackage{multirow}
\usepackage{array}

\newtheorem{theorem}{Theorem}
\newtheorem{proposition}[theorem]{Proposition}
\newtheorem{lemma}[theorem]{Lemma}
\newtheorem{corollary}[theorem]{Corollary}
\newtheorem{remark}{Remark}

\title{Next Semantic Scale Prediction via \\Hierarchical Diffusion Language Models}

\author{
  Cai Zhou$^{1,}$\thanks{Equal contribution. $^\dagger$Equal senior supervision.} ~~~~~Chenyu Wang$^{1,*}$ ~~~~Dinghuai Zhang$^{2,3,*}$ ~~~~Shangyuan Tong$^1$ ~~~~Yifei Wang$^1$ \\ \textbf{Stephen Bates}$^{1\dagger}$ ~~~~\textbf{Tommi Jaakkola}$^{1\dagger}$\\
  $^1$Massachusetts Institute of Technology ~$^2$Microsoft Research ~$^3$Mila - Quebec AI Institute\\
  \texttt{\{caiz428, wangchy\}@mit.edu ~dinzhang@microsoft.com}
}

\begin{document}

\maketitle

\begin{abstract}
  In this paper we introduce Hierarchical Diffusion Language Models (HDLM) -- a novel family of discrete diffusion models for language modeling. HDLM builds on a hierarchical vocabulary where low-level tokens with detailed semantics are surjectively mapped to high-level tokens with coarse-grained meanings. In the forward process, each token is independently perturbed to its higher-level ancestor with more abstract semantics according to the scheduler, while in the reverse process the model progressively predicts the next, more detailed semantics. Taken together, HDLM provides a general time-varying next semantic scale prediction process for language modeling. We derive closed-form expressions for the diffusion Evidence Lower Bound (ELBO), and show that HDLM can be implemented in a flexible manner while including the existing MDLM as a special case. We also propose practical training techniques based on the insights. Extensive text generation experiments validate the effectiveness of HDLM, which demonstrates consistently lower validation and generative perplexity than baselines.
\end{abstract}

\section{Introduction}
Autoregressive language models~\citep{gpt4, deepseekv3, qwen25} are current state-of-the-art methods for language generation. However, the next-token prediction scheme fundamentally constrains their ability to revise previously generated tokens. Alternative paradigms such as diffusion models~\citep{GenerativeGradients, ho2020denoising, ScoreBasedSDE} have hence aroused researchers' interest due to their capability for progressive denoising and refinement. Discrete diffusion models, inherently compatible with the discrete nature of language, have recently gained popularity~\citep{SEDDratio, shi2024simplified, simpleMDLM}. 
There are mainly two types of discrete diffusion model with different forward processes: uniform and masked (see \citep{SEDDratio} for more detailed descriptions). Nevertheless, both types have several fundamental limitations. For masked discrete diffusion, (i) all masked tokens have the same mask embeddings, lacking rich semantics; (ii) it cannot self-correct the tokens already generated, reintroducing the same non-revisability limitation of autoregressive models. 
For uniform discrete diffusion, it uniformly perturbs each token with a random token in the forward process. Some consequent drawbacks include (i) the same token serves as noise with high probability in the noisy stage but becomes semantically meaningful when it is decoded, resulting in inconsistency and confusion in the semantics; (ii) empirically, uniform discrete diffusion does not perform well, partially due to the higher difficulty in denoising from a more complex distribution.

Generalized Interpolating Discrete Diffusion (GIDD)~\citep{gidd} partially addresses these limitations by establishing a general framework for a broader class of processes, especially the interpolation between uniform and masked diffusion. GIDD combines both masking and uniform noise via a defined mixing rate.
However, it is notable that the noisy tokens - including the masked tokens and uniformly perturbed tokens - still lack rich semantics due to their discrete nature. In addition, the self-correction ability of GIDD comes only from the uniform noise, which actually harms the performance compared to masked diffusion.

\begin{figure}[t]
    \vspace{-10pt}
    \centering
    \includegraphics[width=\linewidth]{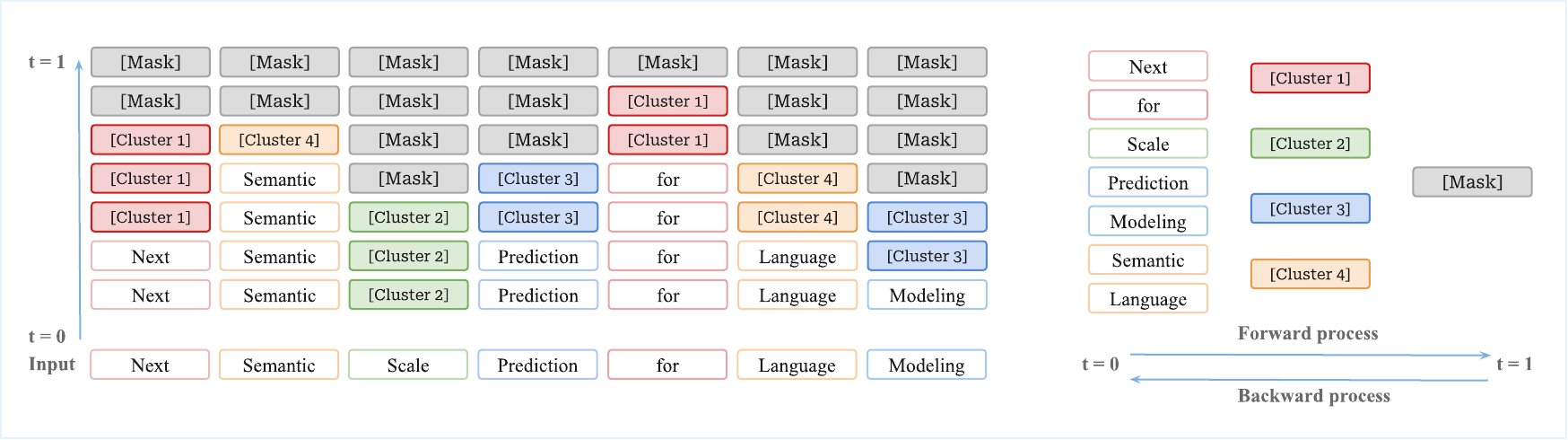}
    \vspace{-18pt}
\caption{Hierarchical Diffusion Language Model (three hierarchies are shown). \textit{(Left)} When training HDLM, word tokens transit to higher hierarchies independently with more abstract semantics. In the reverse process, the model learns to predict the lower level tokens. \textit{(Right)} The vocabularies of three hierarchies (word, cluster, mask) and the illustration of next semantic scale prediction.}
\label{fig:HDLM}
\vspace{-3mm}
\end{figure}

To maximize the advantages of diffusion models, namely arbitrary order generation and progressive self-refinement, we introduce Hierarchical Diffusion Language Model (HDLM), a general and flexible framework for discrete diffusion language modeling via time-varying next-scale prediction. Analogously to the next-scale prediction in visual autoregressive models (VAR)~\citep{var}, which generate image patch tokens from coarse to fine scales (lower to higher resolutions), we introduce semantic hierarchies into language tokens (\Cref{fig:HDLM}). In between clean word tokens and the mask token, we introduce intermediate hierarchies consisting of additional vocabularies. Each token in a lower hierarchy has more detailed semantics than those in a higher hierarchy, and there exists a surjective mapping from each lower hierarchy (fine grain with larger vocabulary size) to its adjacent higher hierarchy (coarse grain with smaller vocabulary size). In the forward diffusion process, each token is progressively turned into its mapping in the next higher hierarchy scale-by-scale at varying diffusion timesteps according to the designed scheduler. In the reverse process, the model learns to progressively predict more detailed tokens (i.e., the next scale of semantics) until the original words are recovered. Remarkably, in both forward and reverse processes, different tokens in a sentence may be in different hierarchies at a diffusion timestep, thus we denote our hierarchical discrete diffusion as \textbf{time-varying next-scale prediction} for language modeling, where the scale is in terms of the semantics of each token instead of spatial resolutions. Intuitively, this paradigm is consistent with the fact that some tokens are easier to predict than others, thus less denoising timesteps are needed to determine their detailed meanings in the next fine-grained hierarchies. The flexible and adaptive denoising process inherently takes advantage of the arbitrary order generation capability of diffusion models, improving the potential to surpass autoregressive language modeling. 

Hierarchical discrete diffusion has several advantages over masked and uniform discrete diffusion. First, both mask and uniform noises lack rich semantics and will confuse the model, thus we introduce intermediate hierarchies. In comparison, the intermediate hierarchies can be viewed as partially masked tokens with high-level semantics, which are inherently more expressive than the single mask token or random tokens. Next, the uncertainty contained in the vague semantics of the coarse hierarchies allows for more flexibility in decoding, which improves the decoding accuracy and provides the possibility of self-refinement; i.e., errors in previous decisions can be mitigated to some extent, while masked discrete diffusion cannot correct decoded tokens at all. Finally, compared to the inconsistency between the semantics of clean tokens and the uniform random noises in uniform discrete diffusion, the semantics in hierarchical discrete diffusion are consistent and coherent, making it easier to accurately denoise in a progressive manner. 

Time-varying next-scale prediction is a novel and principled scheme for language modeling.
Despite the simplicity of its intuition, we establish rigorous theories for our hierarchical discrete diffusion based on the continuous-time Markov chain (CTMC) framework. Our hierarchical discrete diffusion is a special CTMC that restricts the transitions only to occur between adjacent hierarchies. We derive closed-form diffusion Evidence Lower Bound (ELBO) for hierarchical discrete diffusion training, which has clear interpretations. On the practical side, we show that the design space of hierarchical discrete diffusion is flexible, compatible with both GIDD and MDLM types of training and sampling both theoretically and heuristically. Code is available at \href{https://github.com/zhouc20/HDLM}{https://github.com/zhouc20/HDLM}.

\section{Preliminary}
Discrete diffusion models generalize the continuous diffusion framework to discrete state spaces $\mathcal Z$. Specifically, given an initial discrete data point $ X \in \mathcal Z $ drawn from a data distribution $q_0(X)$, these models gradually degrade this data point through a forward Markov chain $Z_0, \dots, Z_T$, where each state $Z_t$ evolves according to a transition kernel $q_{t|s}(Z_{t}|Z_s)$, starting from $Z_1 = X$ and eventually arriving at a prior distribution $p_T(Z_T)$, which is typically easy to sample from. We consider continuous-time Markov Chain in this paper, and select $T=1$. The core task is then to train a model to approximate the reverse transitions of this Markov chain, effectively enabling it to reverse the degradation and recover original data points from samples drawn from the prior distribution. 

D3PM~\cite{austin2021structured} introduces a Markov forward process $q$ that acts independently on each token $q_{t|s}(z_t|z_s) = {\rm Cat}(z_t;  Q_{t|s}\mathbf z_s)$, where $\rm{Cat}$ denotes categorical distribution, and we use $\mathbf z_s$ to denote one-hot encoding vectors of $z_s$, namely the set of $\{\mathbf x\in\{0,1\}^{|V|}, \sum_{i=0}^{|V|} \mathbf x_i=1\}$. With the noise schedule chosen in \cite{shi2024simplified}, the marginal of $z_t$ is $q(z_t|x)={\rm Cat}(z_t; \alpha_t \mathbf x+\beta_t \mathbf m)$ where $\beta_t=1-\alpha_t$ and $\mathbf m$ denotes the one-hot vector of a given mask token.

Generalized Interpolating Discrete Diffusion (GIDD)~\citep{gidd} establishes a general framework for forward processes with flexible mixing schedules. In particular, the forward process is a Markov chain with marginal transitions that can be written as a linear interpolation of categorical distributions between a distribution $\bm \pi_t$ and data distribution $\mathbf x$:
\begin{equation}
    q_t(z_t|x)={\rm Cat}(z_t; \alpha_t \mathbf x+\beta_t \bm \pi_t)
\end{equation}
where $\mathbf x$ is the one-hot encoding of the data $x$, $\beta_t=1-\alpha_t$ where $0\leq \alpha_t\leq 1$, and $\bm \pi_t$ can be any probability distribution that changes smoothly over time. Masked diffusion is a special case of GIDD where $\bm \pi_t=\mathbf m$. By defining the mixing rate and mixing distribution as follows, GIDD is the combination of masking and uniform noise:
\begin{equation}
    \alpha_t=\frac{1-t}{C_t}, \beta_t\bm \pi_t=\frac{t}{C_t}\mathbf m +\frac{c_t}{C_t}\mathbf 1
\end{equation}
where $c_t=Bt^{\frac{\gamma}{2}}(1-t)^{\frac{\gamma}{2}}, C_t=1+Nc_t$ where $N$ is the vocabulary size, and $B$ is a constant chosen so that the desired uniform token ratio is reached. The marginal forward distribution is therefore
\begin{equation}
    q_t(z_t|x)=\frac{1}{C_t}((1-t)\mathbf x+t\mathbf m+c_t\mathbf 1)
\end{equation}
However, it is notable that the noisy tokens - including the masked tokens and uniformly perturbed tokens - still lack rich semantics due to their discrete nature. In addition, the self-correction ability of GIDD comes only from the uniform noise, which actually harms the performance compared to masked diffusion.

\section{Hierarchical Diffusion Language Model}

We now introduce Hierarchical Diffusion Language Model, a family of discrete diffusion models with a next semantic scale prediction scheme. We build our derivations on hierarchical discrete diffusion models, providing flexible models that are compatible with prior work. See \Cref{sec:proof_appendx} for proofs.

\subsection{Forward Process}

We adopt the continuous-time Markov Chain (CTMC) as the mathematical framework for discrete diffusion models. 
Instead of transporting a token directly from clean data $\mathbf{x}$ to masks $\mathbf m$ (or vice versa) as MDLM does~\citep{shi2024simplified,simpleMDLM}, we introduce an intermediate hierarchy with expanded vocabulary to the Markov chain: $\mathbf x\rightarrow \mathbf c \rightarrow \mathbf m$,
The higher level tokens have disjoint support and low-level tokens with detailed semantics are surjectively mapped to high-level tokens with coarse-grained meaning.
Here $\mathbf c=\mathbf c(\mathbf x)=\Gamma\mathbf x$ is a predefined function of $\mathbf x$ (e.g., a clustering of $\mathbf x$), where $\Gamma\in\mathbb R^{|C|\times |V|}$ is the matrix with $\Gamma(i,j)=1$ if and only if $c(x_j)=c_i$ and $0$ otherwise. The vocabulary and the mapping functions of the hierarchies can be either hard-coded or learnable (e.g., via DeepSets~\citep{deepsets}). In our implementation, we pre-define the mappings by clustering the word token embeddings of a pre-trained model given a number of clusters; details of our ``semantic lustering algorithm'' are available in \Cref{appendix_sec:clustering}. 

By introducing this hierarchical decoding procedure, the intermediate states can be viewed as partially decoded, partially masked tokens, which enables richer semantics of the masks with moderate uncertainty. The marginal distribution of the forward process can therefore be written as
\begin{equation}
    q_t(z_t|x)={\rm Cat}(z_t; \alpha_t \mathbf x+\beta_{t,c} \mathbf c(\mathbf x)+\beta_{t,m} \mathbf m)
\end{equation}
where $\beta_{t,c}+\beta_{t,m}=\beta_t:=1-\alpha_t$.
This can be viewed as a natural extension of GIDD where 
\begin{equation}
    \bm \pi_t(\mathbf x)={\rm Cat}(\pi_t; \frac{\beta_{t,c}}{\beta_{t,c}+\beta_{t,m}}\mathbf c(\mathbf x)+\frac{\beta_{t,m}}{\beta_{t,c}+\beta_{t,m}} \mathbf m)
\end{equation}
A significant difference in our hierarchical forward process is that the mixing distribution $\bm \pi_t$ is a function of clean data $\mathbf x$, resulting fundamentally distinct CTMC properties (analyzed as follows) as well as the new training ELBO (\Cref{subsec:training}).

\paragraph{CTMC with block conditional transitions.}
We use the notation $\mathbf z_t\in \mathbb R^{|V|+|C|+1}$ to denote the one hot representation of $z_t$, which is the concatenation of word-, cluster- and mask-level vectors. 
Denote $P_{t|s}$ as the cumulative transition probabilities from state $z_s$ to state $z_t$ at times $s < t$, namely $q_{t|s}(z_t|z_s)=\text{Cat}(z_t;P_{t|s} \mathbf{z_s})$, where the $z_s$-th column in $P_{t|s}$ represents the probability when starting from state $z_s$. Denote the time-inhomogeneous generator matrix (also called the forward transition rate matrix) as $Q_t$ satisfying $\frac{\mathrm d \mathbf z_t}{\mathrm d t}=Q_t^\top \mathbf z_t$, which contains instantaneous transition rates. The Kolmogorov forward and backward equations for CTMC state:
\begin{equation}\label{eq:kolmogorov_forward}
    \frac{\partial P_{t|s}}{\partial t}=Q_t^\top P_{t|s}, ~~\frac{\partial P_{t|s}}{\partial s}=-P_{t|s}Q_s^\top, ~~ P_{s|s}=I
\end{equation}
\begin{proposition}[Time-inhomogeneous generator and cumulative conditional transition matrix of HDLM]\label{prop:hdlm_conditional}
The time-inhomogeneous generator matrix of HDLM is
\begin{equation}\label{eq:generator}
    Q_t=\begin{bmatrix}\frac{\alpha_t'}{\alpha_t}I_{|V|} & -\frac{\alpha_t'}{\alpha_t}\Gamma^\top & 0\\ 0 & \frac{\alpha_t'+\beta_{t,c}'}{\beta_{t,c}}I_{|C|} & -\frac{\alpha_t'+\beta_{t,c}'}{\beta_{t,c}}\Xi^\top\\0 & 0 & 0 \end{bmatrix}
\end{equation}
where
$\Xi=\mathbf 1^{1\times |C|}$. 
The cumulative conditional transition matrix of HDLM is:
\begin{equation}
    P_{t|s}=\begin{pmatrix}  
\displaystyle\frac{\alpha_t}{\alpha_s}I_{|V|} &  
0 &  
0\\[10pt]  
\displaystyle\Bigl(\int_s^t\!\frac{-\alpha_u'}{\alpha_s}\frac{\beta_{t,c}}{\beta_{u,c}}\,\mathrm{d}u\Bigr)\,\Gamma & \displaystyle\frac{\beta_{t,c}}{\beta_{s,c}}I_{|C|} &  
0\\[8pt]  
\displaystyle\Bigl(1-\frac{\alpha_t}{\alpha_s}+\int_s^t\!\frac{\alpha_u'}{\alpha_s}\frac{\beta_{t,c}}{\beta_{u,c}}\,\mathrm{d}u\Bigr)\,\Gamma \Xi & \displaystyle\Bigl(1-\frac{\beta_{t,c}}{\beta_{s,c}}\Bigr)\Xi & 1  
\end{pmatrix}
\end{equation}
\end{proposition}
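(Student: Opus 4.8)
The plan is to establish \Cref{prop:hdlm_conditional} in two stages: first pin down the generator $Q_t$ from the prescribed marginal together with the structural constraints of the hierarchical forward process, and then integrate the Kolmogorov forward equation to obtain $P_{t|s}$.

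For the generator, I would start from the structural constraints of HDLM: a token may move up only one hierarchy at a time, a word may transit only to its own cluster $c(x)$, and there are no transitions within a hierarchy. Writing $Q_t$ in $3\times3$ block form according to the word ($V$), cluster ($C$) and mask ($M$) levels, these constraints force $Q_t$ to be block upper-triangular. Moreover, because the prescribed marginal $\mathbf p_t(x)=\alpha_t\mathbf x+\beta_{t,c}\Gamma\mathbf x+\beta_{t,m}\mathbf m$ assigns word mass, total cluster mass and mask mass that do not depend on which word $x$ we started from, all words must leave at a common rate and all clusters must leak to the mask at a common rate; hence the $(V,V)$ and $(C,C)$ blocks are scalar multiples of the identity, the $(V,C)$ block is a scalar multiple of $\Gamma^\top$ (which encodes $x\mapsto c(x)$), and the $(C,M)$ block is a scalar multiple of the all-ones column $\Xi^\top$. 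The remaining scalar rates are then fixed by substituting $\mathbf p_t(x)$ into $\tfrac{\mathrm d\mathbf p_t}{\mathrm dt}=Q_t^\top\mathbf p_t$ and matching blocks for every $x$: the word block forces the rate $\alpha_t'/\alpha_t$; the cluster block, after subtracting the inflow from the word level, forces $(\alpha_t'+\beta_{t,c}')/\beta_{t,c}$; and the cluster-to-mask rate then follows from the mask block, or equivalently from the requirement that every row of $Q_t$ sum to zero. This gives \eqref{eq:generator}. I would also record that this is a legitimate CTMC only when the schedule keeps the off-diagonal rates nonnegative, i.e.\ $\alpha_t$ nonincreasing and $1-\beta_{t,m}$ nonincreasing.

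For $P_{t|s}$ I would solve $\partial_tP_{t|s}=Q_t^\top P_{t|s}$ with $P_{s|s}=I$ blockwise, exploiting that $Q_t^\top$ is block lower-triangular. The blocks that would encode transitions reversing the forward direction---cluster$\to$word, mask$\to$word and mask$\to$cluster, i.e.\ $P_{VC},P_{VM},P_{CM}$---satisfy homogeneous linear ODEs with zero initial data and so vanish identically. The diagonal blocks then decouple into scalar linear ODEs: $P_{VV}$ solves $\partial_tP_{VV}=(\alpha_t'/\alpha_t)P_{VV}$, integrating to $(\alpha_t/\alpha_s)I_{|V|}$; $P_{CC}$ solves the analogous equation with the cluster-block rate and integrates to the stated survival ratio; and $P_{MM}\equiv1$ because the mask is absorbing. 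The remaining lower blocks follow by variation of parameters: $P_{CV}$ is the Duhamel integral of the $(V,C)$ rate against the already-known $P_{VV}$ and $P_{CC}$, which produces exactly the displayed $\bigl(\int_s^t(-\alpha_u'/\alpha_s)(\beta_{t,c}/\beta_{u,c})\,\mathrm{d}u\bigr)\Gamma$, with $\Gamma$ collapsing each word index onto its cluster; and $P_{MV},P_{MC}$ come out the same way, or more cheaply from column-stochasticity of $P_{t|s}$, since in every column the mask entry equals one minus the sum of the entries above it, which reproduces the $\bigl(1-\tfrac{\alpha_t}{\alpha_s}+\int_s^t(\alpha_u'/\alpha_s)(\beta_{t,c}/\beta_{u,c})\,\mathrm{d}u\bigr)$ and $\bigl(1-\tfrac{\beta_{t,c}}{\beta_{s,c}}\bigr)$ terms. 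Equivalently, every entry can be read off from the first-jump decomposition of a sample path word $\to$ its cluster $\to$ mask. I would finish with the checks $P_{s|s}=I$ (immediate, as each integral vanishes at $t=s$), column-stochasticity, and the sanity check that $P_{t|0}\mathbf x$ recovers $q_t(\cdot\mid x)$.

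The step I expect to be the main obstacle is the off-diagonal bookkeeping: getting the nested Duhamel integrals for $P_{CV},P_{MV},P_{MC}$ right, tracking the $\Gamma$ and $\Xi$ contractions correctly, and---relatedly---verifying the scheduler identities that bring the cluster-block survival integral and the $(V,C)$ integral into the closed forms displayed in the statement. The purely structural part---block triangularity, vanishing of the reverse-direction blocks, zero row sums and the initial condition---is routine by comparison.
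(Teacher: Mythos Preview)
Your overall plan is sound and is essentially the only natural way to establish this proposition; the paper itself does not supply a separate proof of \Cref{prop:hdlm_conditional} but simply states $Q_t$ and $P_{t|s}$ and then plugs $Q_t$ into the ELBO derivation. So there is nothing in the paper to compare your route against---your argument \emph{is} the argument.

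However, the step you flagged as the likely obstacle is in fact a genuine failure point, not just bookkeeping. Once $P_{VC}=0$, the $(C,C)$ block decouples to
\[
\partial_t P_{CC}=\frac{\alpha_t'+\beta_{t,c}'}{\beta_{t,c}}\,P_{CC},\qquad P_{CC}(s\mid s)=I,
\]
whose solution is $P_{CC}(t\mid s)=\exp\!\bigl(\int_s^t\frac{\alpha_u'+\beta_{u,c}'}{\beta_{u,c}}\,\mathrm{d}u\bigr)I$. This equals the paper's $\frac{\beta_{t,c}}{\beta_{s,c}}I$ only when $\alpha_u'\equiv 0$, i.e.\ no diffusion. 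For the paper's own $\gamma=1$ schedule, the cluster exit rate is $\frac{\beta_{t,m}'}{\beta_{t,c}}=\frac{1}{1-t}$, so the correct survival is $\frac{1-t}{1-s}$, while $\frac{\beta_{t,c}}{\beta_{s,c}}=\frac{(1-t)\ln(1-t)}{(1-s)\ln(1-s)}$---a different function. The same substitution $\frac{\beta_{t,c}}{\beta_{u,c}}$ then propagates into the displayed $(C,V)$ and $(M,V)$ blocks via Duhamel, so those closed forms inherit the discrepancy. In short: the stated $Q_t$ and the stated $P_{t|s}$ are mutually inconsistent as written; your Kolmogorov integration will reproduce $Q_t$ but will \emph{not} reproduce the claimed $P_{t|s}$, and no ``scheduler identity'' can bridge the gap. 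This does not affect the rest of the paper, since only $Q_t$ is used downstream (e.g.\ in the proof of \Cref{theorem_elbo}), but you should report the corrected $P_{t|s}$ rather than try to derive the one printed.
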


\Cref{prop:hdlm_conditional} suggests that our hierarchical discrete diffusion process corresponds to a CTMC with a block forward transition rate matrix $Q_t$ and has the mask as an absorbing state.

\paragraph{Generalized processes with stochastic perturbations and multiple hierarchies.}
At inference time, diffusion models denoise based on predictions of previous steps. The accumulation of error may result in OOD contexts compared to those seen during training. To mitigate this potential gap between training and sampling, we introduce an additional cluster token corruption mechanism into the forward process. Denoting the probability that $x$ transits to the correct cluster $c(x)$ as $0<\xi\leq 1$, i.e., 
$\mathbb P_t(c=c(x))=\xi \beta_{t,c},\mathbb P_t(c\neq c(x))=(1-\xi) \beta_{t,c}$,
we assume that a word token will be corrupted into an inaccurate cluster token $c'\neq c(x)$ with probability $\frac{1-\xi}{N_c-1}\beta_{t,c}$, where $N_c$ is the number of clusters. When $\xi=1$, it reduces to the standard HDLM introduced above. We hence have the following noise distribution,
\begin{equation}
   \beta_t \bm \pi_t=\xi\beta_{t,c} \mathbf{c}(x) + \sum_{\mathbf{c}\setminus\mathbf{c}(x)} \frac{1}{N_c-1} (1-\xi)\beta_{t,c} \mathbf{c} + \beta_{t,m} \mathbf{m} 
\end{equation}
This perturbation mechanism helps to train the model to predict correct word tokens even from incorrect cluster tokens or within inaccurate contexts with corrupted clusters, allowing the model to robustly ``self-correct'' in a principled manner. This intuition is strongly supported by the experimental result that the generative perplexity of HDLM with $\xi<1$ is $60\%$ smaller than the baselines, demonstrating the strong generalization and self-correction capability of HDLM.

Our framework also generalizes to an arbitrary number of hierarchical levels, not limited to one intermediate hierarchy. Denote the word tokens as the $0$-th hierarchy and the masks as the $n$-th, then in the forward process with $n-1$ intermediate hierarchies, the tokens follow the Markov chain $\mathbf x\rightarrow \mathbf c_1\rightarrow\dots \rightarrow \mathbf c_{n-1}\rightarrow \mathbf m$. Correspondingly, the model progressively decodes tokens from higher-level hierarchies with more abstract semantics to lower-level ones in the inference time. Multiple hierarchy levels allow for sophisticated control over the generation process, yet may require a more careful design of the sampling algorithm. In this work, we provide theoretical foundations for HDLM with arbitrary hierarchies, while leaving experimental implementations of more intermediate levels as future work. 
More details of the stochastic perturbation mechanism and HDLM with arbitrary hierarchy levels, including their conditional transition matrices, closed-form ELBOs and further discussions, are available in \Cref{appendix_subsec:extended_process}.

\subsection{Reverse Process}
For a noisy input $z_t$ at timestep $t$, the model prediction of distribution over the word-level vocabulary is $\mathbf x_\theta(z_t, t)\in\mathbb R^{|V|}$, which is shortened as $\mathbf x_\theta$ when not causing confusion. We adopt the commonly used reverse process in previous work~\citep{austin2021structured, gidd} as follows,
\begin{equation}
    p_\theta(z_s|z_t)=q_{t|s}(z_t|z_s)\frac{q_s(z_s|\mathbf x_\theta)}{q_t(z_t|\mathbf x_\theta)}
\end{equation}
Here $q_t(\cdot|\mathbf x_\theta):=\sum_{i:x_i\in \mathcal V} \mathbf x_\theta[i]q_t(\cdot|x_i)$, where $\mathbf x_\theta[i]$ is the predicted probability of token $x_i$. 
Furthermore, we can prove that the backward rate matrix $\hat Q_t^\theta$ of our hierarchical forward process with block conditional transition matrix shares the same relation with the forward rate as GIDD~\citep{gidd}.
\begin{equation}\label{eq:backward_rate}
    \hat Q_t^\theta (z_t,z_s)=-\delta_{z_s, z_t}\sum_{z'}Q_t(z',z_t)\frac{q_t(z'|\mathbf x_\theta)}{q_t(z_t|\mathbf x_\theta)}+Q_t(z_s, z_t)\frac{q_t(z_s|\mathbf x_\theta)}{q_t(z_t|\mathbf x_\theta)}
\end{equation}
For convenience, we omit $\theta$ and use $\hat Q_t$ to denote $\hat Q_t^\theta$ while not causing confusion.

\subsection{Training}\label{subsec:training}

\paragraph{Closed-form CT-ELBO of HDLM.}We now derive closed-form expressions for diffusion Evidence Lower Bound (ELBO) of HDLM based on the forward and backward rate matrix of its hierarchical process. For arbitrary noise distribution $\bm \pi_t$ and schedules $\alpha_t, \beta_{t}$, we have ELBO as follows~\citep{gidd}. 

\begin{lemma}[Proposition H.4 in~\citep{gidd}]\label{lemma_elbo_rate}
    For any CTMC diffusion process with marginals $q_t(z_t|x)$, forward rate $Q_t(z_s,z_t)$, and backward rate $\hat Q_t(z_t,z_s)$, the continuous-time ELBO is
\begin{align}\label{eq:elbo_rate}
    \log p(x)
    &\geq \mathbb E_{t,z_t}\Big[\sum_{z_s\neq z_t}Q_t(z_s,z_t)\frac{q_t(z_s|x)}{q_t(z_t|x)}\log\frac{\hat Q_t(z_t,z_s)q_t(z_t|x)}{Q_t(z_s,z_t)q_t(z_s|x)}+\hat Q_t(z_t,z_t)-Q_t(z_t,z_t)\Big]+C \\
    &=\mathbb E_{t,z_t}\Big[\sum_{z_s\neq z_t}Q_t(z_s,z_t)\frac{q_t(z_s|x)}{q_t(z_t|x)}\log\frac{q_t(z_s|\mathbf x_\theta)q_t(z_t|x)}{q_t(z_t|\mathbf x_\theta)q_t(z_s|x)}-\sum_{z'}Q_t(z',z_t)\frac{q_t(z'|\mathbf x_\theta)}{q_t(z_t|\mathbf x_\theta)}\Big]+C \notag
\end{align}
where $t\sim \mathcal U(0,1)$ and $C=\mathbb E_{q_0(z_0|x)}[\log p(x|z_0)]-D_{KL}(q_1(z_1|x)||p_1(x_1))$.
\end{lemma}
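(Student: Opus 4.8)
\emph{Proof plan.} The plan is to derive the bound from first principles: discretize time, apply the standard variational bound for a discrete-time latent-variable Markov model, and pass to the continuum limit, reading off the rate-matrix form from first-order Taylor expansions of the transition kernels in the step size. (The statement is exactly Proposition~H.4 of \citep{gidd}, and its proof there uses only the abstract data $(q_t(z_t|x),Q_t,\hat Q_t)$, not the hierarchical structure of $Q_t$ from \eqref{eq:generator}; so HDLM inherits it verbatim. I outline the argument for completeness.)

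First I would fix a partition $0=t_0<t_1<\cdots<t_N=1$ with mesh $\Delta t\to 0$ and describe the model's joint over $(x,z_{t_0},\dots,z_{t_N})$: draw $z_{t_N}\sim p_1$, run the reverse chain with kernels $p_\theta(z_{t_{i-1}}|z_{t_i})$ (generator $\hat Q_t$), and emit $x\sim p(x|z_{t_0})$. Applying Jensen to $\log p(x)=\log\mathbb{E}_{q}[p(x,z_{t_{0:N}})/q(z_{t_{0:N}}|x)]$, then using the forward Markov property together with Bayes' rule $q(z_{t_i}|z_{t_{i-1}})=q(z_{t_{i-1}}|z_{t_i},x)\,q_{t_i}(z_{t_i}|x)/q_{t_{i-1}}(z_{t_{i-1}}|x)$ and telescoping, I get the standard decomposition
\[
\log p(x)\ \ge\ \mathbb{E}_{q_0(z_0|x)}\big[\log p(x|z_0)\big]\ -\ D_{KL}\big(q_1(z_1|x)\,\|\,p_1\big)\ -\ \sum_{i=1}^{N}\mathbb{E}_{q_{t_i}(z_{t_i}|x)}\Big[D_{KL}\big(q(z_{t_{i-1}}|z_{t_i},x)\,\|\,p_\theta(z_{t_{i-1}}|z_{t_i})\big)\Big],
\]
whose first two terms are exactly the constant $C$ (independent of $\Delta t$), so everything reduces to the sum of KL terms.

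Next I would Taylor-expand the kernels. From $P_{s|s}=I$ and $\partial_t P_{t|s}=Q_t^\top P_{t|s}$ one gets $q_{t_i|t_{i-1}}(z_{t_i}|z_{t_{i-1}})=\delta_{z_{t_i},z_{t_{i-1}}}+\Delta t\,Q_{t_i}(z_{t_{i-1}},z_{t_i})+o(\Delta t)$ off the diagonal, and Bayes then exhibits the forward posterior, to leading order, as a CTMC with reverse rate $\bar Q_t(z_t,z_s)=Q_t(z_s,z_t)\,q_t(z_s|x)/q_t(z_t|x)$ for $z_s\neq z_t$ (holding rate fixed by rows summing to zero). Using the elementary fact that for $p=(1-a\Delta t)\delta_{z_t}+\Delta t\sum_{z_s\neq z_t}R(z_s)\delta_{z_s}$ and the analogous $p_\theta$ with rates $\hat R$,
\[
D_{KL}(p\,\|\,p_\theta)=\Delta t\Big[\sum_{z_s\neq z_t}R(z_s)\log\tfrac{R(z_s)}{\hat R(z_s)}+\sum_{z_s\neq z_t}\big(\hat R(z_s)-R(z_s)\big)\Big]+o(\Delta t),
\]
applied with $R=\bar Q_t(z_t,\cdot)$ and $\hat R=\hat Q_t(z_t,\cdot)$, summing over $i$, and recognizing the Riemann sum, the sum of KL terms converges to $\int_0^1\mathbb{E}_{q_t(z_t|x)}\big[\sum_{z_s\neq z_t}\bar Q_t(z_t,z_s)\log\tfrac{\bar Q_t(z_t,z_s)}{\hat Q_t(z_t,z_s)}+\sum_{z_s\neq z_t}(\hat Q_t(z_t,z_s)-\bar Q_t(z_t,z_s))\big]\,dt$. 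Negating this and substituting $\bar Q_t(z_t,z_s)=Q_t(z_s,z_t)q_t(z_s|x)/q_t(z_t|x)$ turns the first sum into the $\log$ term of \eqref{eq:elbo_rate}; for the second sum, the marginal Kolmogorov identity $\sum_{z'}Q_t(z',z_t)q_t(z'|x)=\partial_t q_t(z_t|x)$ gives $\bar Q_t(z_t,z_t)=Q_t(z_t,z_t)-\partial_t\log q_t(z_t|x)$, hence $-\sum_{z_s\neq z_t}(\hat Q_t-\bar Q_t)=\hat Q_t(z_t,z_t)-Q_t(z_t,z_t)+\partial_t\log q_t(z_t|x)$, and the last term vanishes under $\mathbb{E}_{t,z_t}$ since $\mathbb{E}_{q_t(z_t|x)}[\partial_t\log q_t(z_t|x)]=\partial_t\sum_{z_t}q_t(z_t|x)=0$, so it folds into $C$. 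This gives the first line of \eqref{eq:elbo_rate}; inserting the explicit backward rate $\hat Q_t(z_t,z_s)=Q_t(z_s,z_t)q_t(z_s|\mathbf x_\theta)/q_t(z_t|\mathbf x_\theta)$ from \eqref{eq:backward_rate} collapses the $\log$ to $\log\frac{q_t(z_s|\mathbf x_\theta)q_t(z_t|x)}{q_t(z_t|\mathbf x_\theta)q_t(z_s|x)}$ and the holding-rate term to $-\sum_{z'}Q_t(z',z_t)q_t(z'|\mathbf x_\theta)/q_t(z_t|\mathbf x_\theta)$, yielding the second line.

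The hard part will be making the continuum limit rigorous: controlling the $o(\Delta t)$ remainders uniformly so that $\sum_i o(\Delta t)\to 0$; handling the fact that the forward rates in \eqref{eq:generator} blow up as $t\to 0^+$ and $t\to 1^-$ (the integrand remains integrable because the weights $q_t(z_s|x)$ decay at exactly the compensating rate, and the leftover boundary layers are what the reconstruction term $\mathbb{E}_{q_0(z_0|x)}[\log p(x|z_0)]$ and the prior term $D_{KL}(q_1(z_1|x)\,\|\,p_1)$ in $C$ absorb); and justifying the exchange of limit, expectation, and time integral by dominated convergence. A cleaner alternative I would fall back on is to bypass the discretization entirely and invoke the Radon--Nikodym/Girsanov identity for path measures of continuous-time Markov jump processes, which produces the integral in one stroke with $R_t=\bar Q_t$, $\hat R_t=\hat Q_t$; this is the route in \citep{gidd}, and it transfers to HDLM unchanged because it references only the marginals and the forward/backward rates.
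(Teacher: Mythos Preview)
Your proposal is correct. Note, however, that the paper does not actually prove this lemma: it is quoted verbatim as Proposition~H.4 of \citep{gidd} and simply invoked as an imported result, with only the one-line observation afterward that $C=0$ for HDLM and MDLM. So there is nothing in the paper to compare your argument against beyond the citation itself.

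Your discretize-then-limit derivation is the standard route and is sound as sketched; the algebra connecting $\bar Q_t(z_t,z_s)=Q_t(z_s,z_t)q_t(z_s|x)/q_t(z_t|x)$ to the displayed form, and the reduction of the holding-rate difference via \eqref{eq:backward_rate}, are both correct. One small wording issue: the $\partial_t\log q_t(z_t|x)$ term does not ``fold into $C$''---as you yourself compute, its expectation under $\mathbb{E}_{t,z_t}$ is exactly zero, so it simply vanishes rather than being absorbed into the boundary constant. The path-measure alternative you mention at the end would indeed sidestep the uniform-remainder and endpoint-singularity issues you flag, and since the lemma is stated at the level of abstract $(q_t,Q_t,\hat Q_t)$ data, either route transfers to HDLM without modification.
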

One could easily verify that $C=0$ for HDLM and MDLM, hence will be omitted afterwards.
Now for our standard hierarchical process (with one intermediate hierarchy and without stochastic perturbations), plugging in the forward rate matrix and working through some algebra yields the following main result for HDLM:
\begin{theorem}[Closed-form CT-ELBO for HDLM with hierarchical CTMC diffusion process and block conditional transition]\label{theorem_elbo}
    \begin{align}\label{eq:elbo_new}
    \log p(x)&\geq \mathbb E_{t, z_t}\bigg[\delta_{z_t, c}\frac{-\alpha_t'}{\beta_{t,c}}\log \Big(\frac{p_\theta(x)}{\displaystyle\sum_{x': \Gamma \mathbf x'=\Gamma \mathbf x}p_\theta(x')}\Big)+\delta_{z_t, m}\frac{\beta_{t,m}'}{\beta_{t,m}}\log \Big(\sum_{x': \Gamma \mathbf x'=\Gamma \mathbf x}p_\theta(x')\Big)\bigg]\notag\\&-\underbrace{\mathbb E_{t,z_t}[\frac{\alpha_t'}{\alpha_t}\delta_{z_t,x}+\frac{\beta_{t,c}'}{\beta_{t,c}}\delta_{z_t, c}+\frac{\beta_{t,m}'}{\beta_{t,m}}\delta_{z_t,m}]}_{=0}\\&=\mathbb E_{t, z_t}\bigg[\delta_{z_t, c}\frac{-\alpha_t'}{\beta_{t,c}}\mathbf x^\top \log \frac{\mathbf x_\theta \odot (\Gamma^\top \Gamma \mathbf x)}{\mathbf x_\theta^\top \Gamma^\top \Gamma \mathbf x}+\delta_{z_t, m}\frac{\beta_{t,m}'}{\beta_{t,m}}(\Gamma \mathbf x)^\top \log( \Gamma \mathbf x_\theta)\bigg]\\&=-\mathbb E_{t,z_t} \Big[\delta_{z_t,c} w_{t,c} {\rm CE}(\mathbf x, \frac{\mathbf x_\theta \odot (\Gamma^\top \Gamma \mathbf x)}{\mathbf x_\theta^\top \Gamma^\top \Gamma \mathbf x})+\delta_{z_t,m}w_{t,m}{\rm CE}(\Gamma\mathbf x, \Gamma \mathbf x_\theta)\Big]
\end{align}
where $p_\theta(x_i)=\mathbf x_\theta [i]$ is the predicted probability of token $x_i$, $\odot$ is the Hadamard (element-wise) product, and
\begin{equation}
    w_{t,c}=\frac{-\alpha_t'}{\beta_{t,c}}, ~w_{t,m}=\frac{\beta_{t,m}'}{\beta_{t,m}}
\end{equation}
\end{theorem}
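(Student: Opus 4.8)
The plan is to start from the model‑dependent form of the ELBO in \Cref{lemma_elbo_rate} (the second line of \Cref{eq:elbo_rate}, which has already absorbed the backward rate of \Cref{eq:backward_rate}), substitute the explicit block generator $Q_t$ of \Cref{prop:hdlm_conditional}, and evaluate the two inner sums separately for each of the three states $z_t$ can occupy: a word $x$, the cluster $c(x)$, or the mask $m$. The observation that makes everything collapse is that for the standard process the forward marginal $q_t(\cdot\,|\,x)=\mathrm{Cat}\big(\alpha_t\mathbf x+\beta_{t,c}\mathbf c(\mathbf x)+\beta_{t,m}\mathbf m\big)$ is supported only on $\{x,\,c(x),\,m\}$; hence in $\sum_{z_s\neq z_t}$ the only terms with both $q_t(z_s\,|\,x)\neq 0$ and $Q_t(z_s,z_t)\neq 0$ are $z_s=x$ when $z_t=c(x)$ and $z_s=c(x)$ when $z_t=m$, while for $z_t=x$ the set of incoming states is empty because the word column of $Q_t$ has only its diagonal entry. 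I also record the model marginals $q_t(\cdot\,|\,\mathbf x_\theta)=\sum_i\mathbf x_\theta[i]\,q_t(\cdot\,|\,x_i)$, so that $q_t(c_k\,|\,\mathbf x_\theta)=\beta_{t,c}\sum_{i:\,c(x_i)=c_k}p_\theta(x_i)$ and $q_t(m\,|\,\mathbf x_\theta)=\beta_{t,m}$.

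For $z_t=c(x)$, the lone log‑term has $z_s=x$ with $Q_t(x,c)=-\alpha_t'/\alpha_t$, $q_t(x\,|\,x)=\alpha_t$, $q_t(c\,|\,x)=\beta_{t,c}$; the $\alpha_t$ and $\beta_{t,c}$ prefactors cancel and the log‑ratio becomes $\log\big(p_\theta(x)/\sum_{x':\,\Gamma\mathbf x'=\Gamma\mathbf x}p_\theta(x')\big)$ with weight $-\alpha_t'/\beta_{t,c}$, whereas the term $\sum_{z'}Q_t(z',c)\,q_t(z'\,|\,\mathbf x_\theta)/q_t(c\,|\,\mathbf x_\theta)$ — inflow from the words of cluster $c$ plus the diagonal $Q_t(c,c)=(\alpha_t'+\beta_{t,c}')/\beta_{t,c}$ — reduces, factoring out $\sum_{i:\,c(x_i)=c}p_\theta(x_i)$, to $\beta_{t,c}'/\beta_{t,c}$. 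For $z_t=m$, the lone log‑term has $z_s=c(x)$ with $Q_t(c,m)=-(\alpha_t'+\beta_{t,c}')/\beta_{t,c}=\beta_{t,m}'/\beta_{t,c}$ (using $\alpha_t'+\beta_{t,c}'+\beta_{t,m}'=0$), $q_t(c\,|\,x)=\beta_{t,c}$, $q_t(m\,|\,x)=\beta_{t,m}$; since $q_t(m\,|\,\mathbf x_\theta)=\beta_{t,m}$ the log‑ratio becomes $\log\sum_{x':\,\Gamma\mathbf x'=\Gamma\mathbf x}p_\theta(x')$ with weight $\beta_{t,m}'/\beta_{t,m}$, and the $\sum_{z'}$ term, now an inflow from all clusters, reduces via $\sum_k q_t(c_k\,|\,\mathbf x_\theta)=\beta_{t,c}$ to $\beta_{t,m}'/\beta_{t,m}$. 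For $z_t=x$ there is no log‑term and only the diagonal $Q_t(x,x)=\alpha_t'/\alpha_t$ survives in the $\sum_{z'}$ term. Collecting the three $\sum_{z'}$ pieces gives exactly the bracket marked ``$=0$'', and it vanishes once we take $\mathbb E_{z_t}$ for fixed $t$: since $\mathbb P_t(z_t=x)=\alpha_t$, $\mathbb P_t(z_t=c)=\beta_{t,c}$, $\mathbb P_t(z_t=m)=\beta_{t,m}$, the weighted sum is $\alpha_t'+\beta_{t,c}'+\beta_{t,m}'=(\alpha_t+\beta_{t,c}+\beta_{t,m})'=0$. Finally $C=0$ because $\alpha_0=1$ makes $q_0(\cdot\,|\,x)$ a point mass at $x$ with exact reconstruction, and $\beta_{1,m}=1$ makes $q_1(\cdot\,|\,x)=\delta_m=p_1$, so both terms of $C$ vanish.

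What remains is to recast the two scalar log‑terms in vector form. Because $\Gamma\mathbf x$ is the one‑hot indicator of $c(x)$, the vector $\Gamma^\top\Gamma\mathbf x$ marks with $1$ every token sharing $x$'s cluster, so $\mathbf x_\theta^\top\Gamma^\top\Gamma\mathbf x=\sum_{x':\,\Gamma\mathbf x'=\Gamma\mathbf x}p_\theta(x')$, $(\Gamma\mathbf x)^\top\log(\Gamma\mathbf x_\theta)=\log\sum_{x':\,\Gamma\mathbf x'=\Gamma\mathbf x}p_\theta(x')$, and $\mathbf x^\top\log\big((\mathbf x_\theta\odot\Gamma^\top\Gamma\mathbf x)/(\mathbf x_\theta^\top\Gamma^\top\Gamma\mathbf x)\big)=\log\big(p_\theta(x)/\sum_{x':\,\Gamma\mathbf x'=\Gamma\mathbf x}p_\theta(x')\big)$; reading these as negated cross‑entropies against the renormalized within‑cluster target and against the cluster target, and substituting $w_{t,c}=-\alpha_t'/\beta_{t,c}$, $w_{t,m}=\beta_{t,m}'/\beta_{t,m}$, yields the three stated equivalent forms. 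I expect the only real difficulty to be bookkeeping: correctly identifying the support of $q_t(\cdot\,|\,x)$ and the inflow/outflow structure of the block generator so that each double sum genuinely collapses to a single term, and keeping signs consistent given $\alpha_t'<0$, $\beta_{t,m}'>0$ and the constraint $\alpha_t'+\beta_{t,c}'+\beta_{t,m}'=0$; once these are pinned down, every cancellation is immediate.
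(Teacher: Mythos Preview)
Your proposal is correct and follows essentially the same route as the paper's proof: both start from the second line of \Cref{eq:elbo_rate}, split into the three cases $z_t\in\{x,c(x),m\}$, identify the single surviving log-term in each case via the support of $q_t(\cdot\,|\,x)$ and the block structure of $Q_t$, and collapse the $\sum_{z'}$ piece to the constants that form the ``$=0$'' bracket. The only cosmetic difference is that you kill the bracket pointwise in $t$ via $(\alpha_t+\beta_{t,c}+\beta_{t,m})'=0$, whereas the paper integrates over $t$ and uses the boundary values; both are immediate.
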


Interestingly, \Cref{eq:elbo_new} can be interpreted as a combination of two cross-entropy (CE) losses corresponding to different hierarchies. The first term applies to each cluster token, requiring the model to classify within the set where word tokens are mapped to the cluster. This can be viewed as if restricting the predictions of word-level tokens within the known cluster and dropping the probabilities outside the cluster; thus $\mathbf x_\theta$ is element-wise multiplied by $\Gamma^\top \Gamma \mathbf x$ and normalized by $\mathbf x_\theta^\top \Gamma^\top \Gamma \mathbf x$, implying that the model only needs to predict word tokens within the cluster. The second term applies to each mask token, which is equivalent to a cluster-level classification in which predicting the word tokens within the correct cluster suffices.

\paragraph{Analysis of training ELBO.}

We continue with a more in-depth analysis of the HDLM ELBO. It can be proved that the expectations of the weights for both the token-level and cluster-level cross-entropy loss are invariant to the schedule.
\begin{proposition}[Invariance of both token-level and cluster-level loss weights]\label{prop:invariance_new_weights}
    \begin{align}
    \mathbb E_{t,z_t}[\delta_{z_t,c}w_{t,c}]=\mathbb E_t[-\alpha_t']=1\\
    \mathbb E_{t,z_t}[\delta_{z_t,m}w_{t,m}]=\mathbb E_t[\beta_{t,m}']=1
\end{align}
\end{proposition}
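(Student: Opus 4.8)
The plan is to recognize that $\delta_{z_t,c}$ and $\delta_{z_t,m}$ are simply Bernoulli indicators for which hierarchy the noised token lands in, and that their expectations under the forward marginal are exactly the scheduler coefficients. Recall $q_t(z_t\mid x)=\mathrm{Cat}(z_t;\alpha_t\mathbf x+\beta_{t,c}\mathbf c(\mathbf x)+\beta_{t,m}\mathbf m)$. Since the word-, cluster-, and mask-level one-hot vectors have disjoint support (the block structure exhibited in \Cref{prop:hdlm_conditional}), the total mass on the cluster block is $\mathbb E_{z_t\mid x}[\delta_{z_t,c}]=\beta_{t,c}$ and on the mask block is $\mathbb E_{z_t\mid x}[\delta_{z_t,m}]=\beta_{t,m}$. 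Crucially these occupancy probabilities do not depend on $x$, so they are unchanged after averaging over the data as well.

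Next I would evaluate the two expectations in stages. Conditioning on $t$, the weight $w_{t,c}=-\alpha_t'/\beta_{t,c}$ is deterministic, so $\mathbb E_{z_t}[\delta_{z_t,c}w_{t,c}]=\beta_{t,c}\cdot(-\alpha_t'/\beta_{t,c})=-\alpha_t'$, and identically $\mathbb E_{z_t}[\delta_{z_t,m}w_{t,m}]=\beta_{t,m}\cdot(\beta_{t,m}'/\beta_{t,m})=\beta_{t,m}'$; the cancellation is legitimate because $\beta_{t,c},\beta_{t,m}>0$ on the open interval, and the endpoints form a null set for $t\sim\mathcal U(0,1)$. Then I take the outer expectation over $t\sim\mathcal U(0,1)$ and apply the fundamental theorem of calculus: $\mathbb E_t[-\alpha_t']=-\int_0^1\alpha_t'\,\mathrm dt=\alpha_0-\alpha_1$ and $\mathbb E_t[\beta_{t,m}']=\int_0^1\beta_{t,m}'\,\mathrm dt=\beta_{1,m}-\beta_{0,m}$. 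Finally I invoke the boundary conditions built into any admissible HDLM schedule: at $t=0$ the process sits on clean data, so $\alpha_0=1$ and $\beta_{0,c}=\beta_{0,m}=0$; at $t=1$ it reaches the absorbing mask, so $\alpha_1=0$, $\beta_{1,c}=0$, and $\beta_{1,m}=1$. Substituting yields $\alpha_0-\alpha_1=1$ and $\beta_{1,m}-\beta_{0,m}=1$, establishing both identities.

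There is no genuinely hard step here; the proposition is essentially a telescoping/bookkeeping fact, and the only points requiring care are (i) justifying $\mathbb E_{z_t}[\delta_{z_t,c}]=\beta_{t,c}$ and $\mathbb E_{z_t}[\delta_{z_t,m}]=\beta_{t,m}$ from the disjoint-support block structure — note this uses nothing about the clustering map $\Gamma$ beyond the vocabulary partition — and (ii) making the schedule's boundary values explicit so the argument covers the full design space rather than one specific $(\alpha_t,\beta_{t,c},\beta_{t,m})$ choice. I would therefore state those endpoint conditions as part of the definition of an HDLM schedule and then carry out the two-line computation above.
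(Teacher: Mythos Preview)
Your proposal is correct and follows essentially the same route as the paper: compute the inner expectation over $z_t$ using the occupancy probabilities $\mathbb E_{z_t}[\delta_{z_t,c}]=\beta_{t,c}$ and $\mathbb E_{z_t}[\delta_{z_t,m}]=\beta_{t,m}$ to cancel the denominators in $w_{t,c},w_{t,m}$, then integrate $-\alpha_t'$ and $\beta_{t,m}'$ over $t\in[0,1]$ and invoke the boundary conditions $\alpha_0=1,\alpha_1=0,\beta_{0,m}=0,\beta_{1,m}=1$. If anything, you are slightly more explicit than the paper about the disjoint-support justification and the null-set caveat at the endpoints.
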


\Cref{prop:invariance_new_weights} is an intriguing neat property that is a generalization of the invariant results in MDLM~\citep{kingma2023understanding, simpleMDLM}. This implies that while different schedules may result in various weights $w_{t,c}$ and $w_{t,m}$ - hence enabling the flexibility in emphasizing different stages of the diffusion process, the expectation of each loss weight w.r.t. timesteps and samples is always fixed to $1$, which is reasonable. 
Despite the same expectations of weights, HDLM can actually achieve lower cross-entropy loss (implying higher single-step denoising quality) compared with MDLM, thanks to the cluster tokens to be predicted or in the context providing more information than mask tokens. 

\Cref{theorem_elbo} suggests a standard training loss for HDLM based on the strict ELBO:
\begin{equation}
    \mathcal L(\mathbf x, \mathbf x_\theta, t)=\mathbb E_{t,z_t\sim q_t(z_t|x)} \Big[\delta_{z_t,c} w_{t,c} {\rm CE}(\mathbf x, \frac{\mathbf x_\theta \odot (\Gamma^\top \Gamma \mathbf x)}{\mathbf x_\theta^\top \Gamma^\top \Gamma \mathbf x})+\delta_{z_t,m}w_{t,m}{\rm CE}(\Gamma\mathbf x, \Gamma \mathbf x_\theta)\Big]
\end{equation}

Consequently, we further show that MDLM is a special case of our HDLM where there is only one cluster token. Despite being intuitive by construction, we can prove that HDLM ELBO reduces to MDLM's, which further validates that our hierarchical discrete diffusion is a generic framework compatible with previous work but provides more design space.

\begin{remark}
    MDLM is a special case of HDLM where there is only one cluster whose embedding is the same as the mask.
\end{remark}

\subsection{Example Processes}

We now give some examples of forward processes. While the schedules are arbitrary, we consider the following transition process. 
When a token becomes a cluster token at time $\tau$, it will become a mask token at time $t_{m,\tau}\sim \mathcal U(\tau, 1)$. Thus the rate of cluster token and mask token follow,
\begin{equation}
\beta_{t,c}=\int_0^t \frac{1-t}{1-\tau}(-{\rm d}\alpha_\tau),  ~~ \beta_{t,m}=\int_0^t \frac{t-\tau}{1-\tau}(-{\rm d}\alpha_\tau)
\end{equation}
With $\alpha_t=(1-t)^\gamma$, we have the following general expression (visualized in \Cref{fig:schedule} for $\gamma=1,2,3$):
\begin{align}
    \mathbb \beta_{t,c}=\left\{\begin{aligned}
        \frac{\gamma}{\gamma-1} (1-t-(1-t)^{\gamma}), \gamma\neq 1\\ -(1-t)\ln(1-t), \gamma=1\end{aligned}\right.;~~ \beta_{t,m}=\left\{ \begin{aligned}\frac{1}{\gamma-1}(\gamma t+(1-t)^\gamma-1), \gamma\neq 1\\ t+(1-t)\ln(1-t), \gamma=1 \end{aligned}\right.
\end{align}

\begin{figure}
    \centering
    \includegraphics[width=0.32\linewidth]{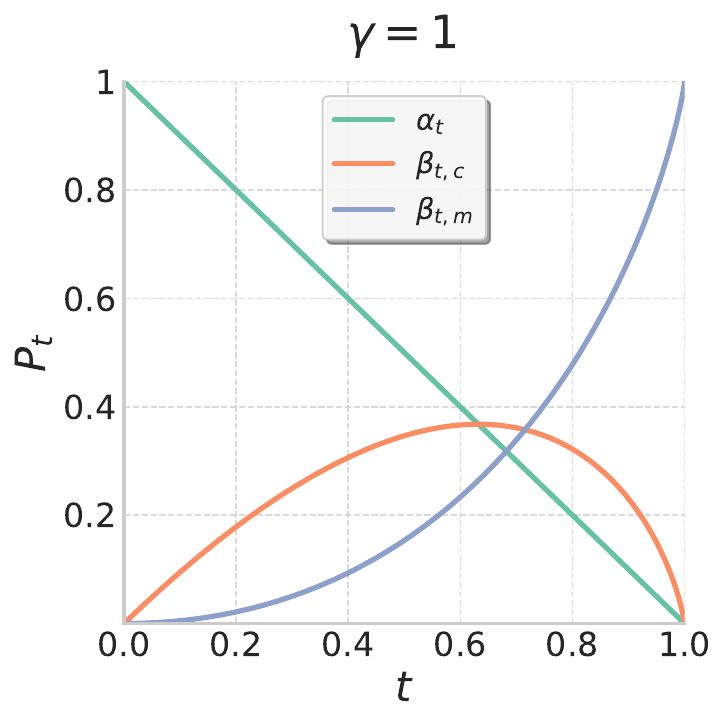}
    \includegraphics[width=0.32\linewidth]{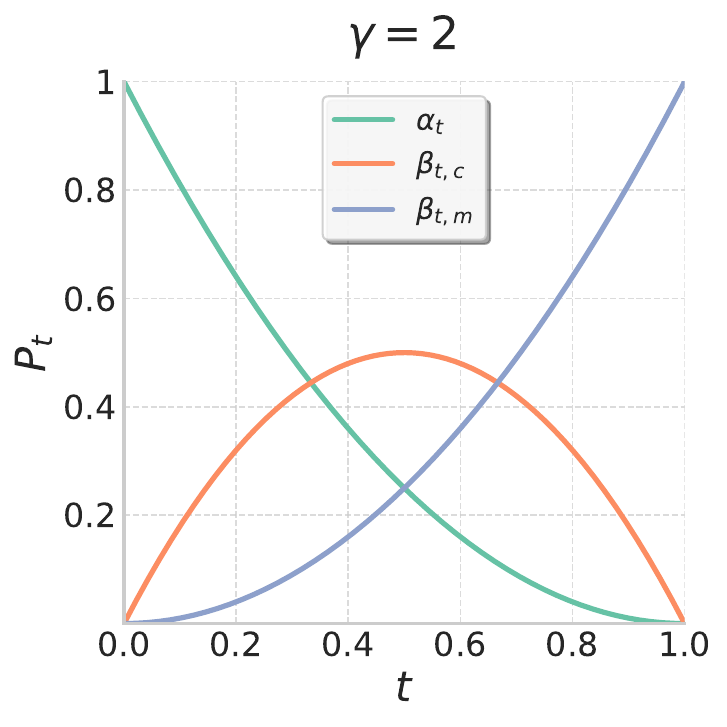}
    \includegraphics[width=0.32\linewidth]{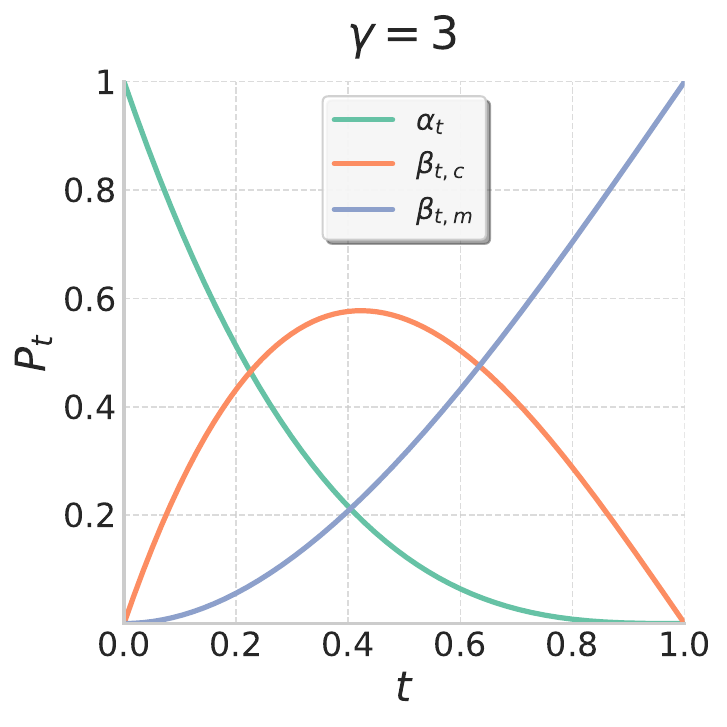}
    \vspace{-2pt}
    \caption{Marginal probabilities of the example forward processes with $\alpha_t=(1-t)^{\gamma}$ and $\gamma=1,2,3$.}
    \label{fig:schedule}
    \vspace{-1pt}
\end{figure}

\begin{figure}
    \centering
    \includegraphics[width=0.32\linewidth]{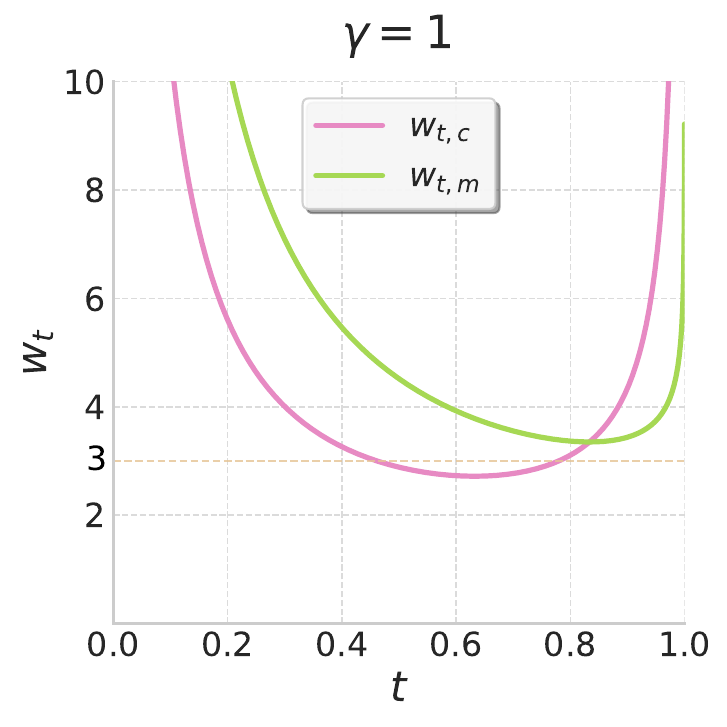}
    \includegraphics[width=0.32\linewidth]{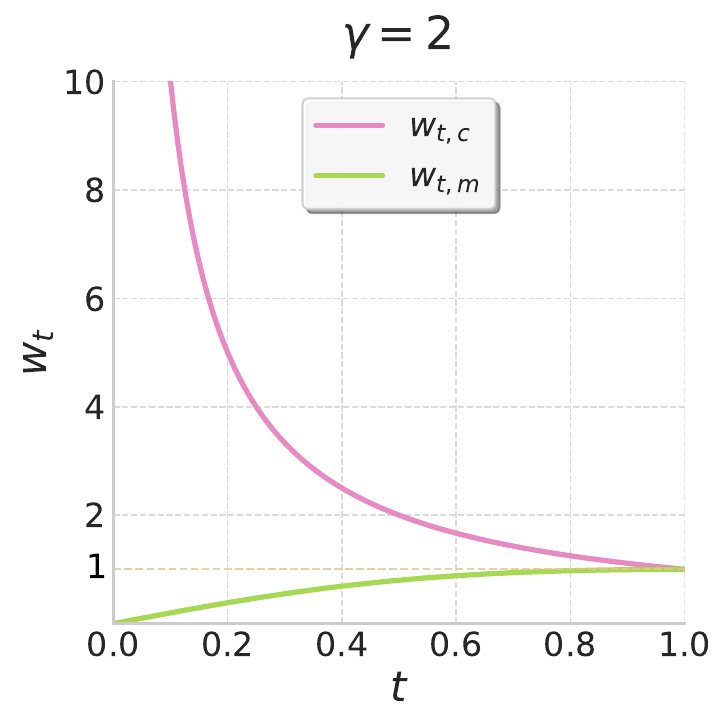}
    \includegraphics[width=0.32\linewidth]{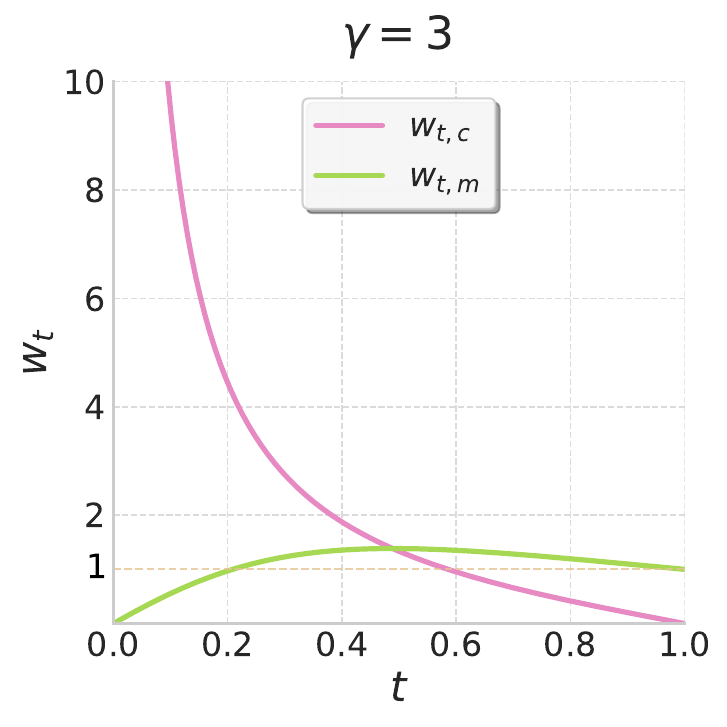}
    \vspace{-2pt}
    \caption{Cross entropy weights of the example forward processes with $\alpha_t=(1-t)^{\gamma}$ and $\gamma=1,2,3$.}
    \label{fig:weights}
    \vspace{-1pt}
\end{figure}

Therefore, the weights $w_{t,c}, w_{t,m}$ can be computed as (visualized in \Cref{fig:weights}): 
\begin{align}
    w_{t,c}=\left\{\begin{aligned}(\gamma-1)\frac{(1-t)^{\gamma-2}}{1-(1-t)^{\gamma-1}}, \gamma\neq 1\\ \frac{1}{-(1-t)\ln(1-t)}, \gamma=1\end{aligned}\right.; ~~
    w_{t,m}=\left\{\begin{aligned} \frac{\gamma(1-(1-t)^{\gamma-1})}{\gamma t+(1-t)^\gamma}, \gamma\neq 1\\ \frac{-\ln(1-t)}{t+(1-t)\ln(1-t)}, \gamma=1\end{aligned}\right.
\end{align}

\subsection{Improved Techniques for HDLM Training and Sampling}
\label{sec:practical}

Based on our theoretical insights, we continue to analyze the HDLM training and sampling processes, proposing more practical techniques to improve the model. 

\paragraph{Gradient of cluster-level loss.}
Compared with MDLM, the cluster-level cross entropy loss in HDLM is additional. However, the loss at the cluster level would encourage predictions within the same clusters but not exactly the target word token $x$, causing the gradient to hold against optimizing the cross entropy at the token level. In particular, consider the negative gradient of the cluster-level loss w.r.t. $\mathbf x_\theta(z_t,t)$,
\begin{align}
   -\nabla_{\mathbf x_\theta} \Big(\delta_{z_t, m}w_{t,m}\text{CE}(\Gamma\mathbf x, \Gamma\mathbf x_\theta(z_t, t))\Big)=\delta_{z_t, m}w_{t,m} \Gamma^\top(\Gamma\mathbf x\odot \frac{1}{\Gamma\mathbf x_\theta})
\end{align}
This suggests that the gradient descent would encourage $\mathbf x_\theta$ predicting all the word tokens within the cluster $\{x': c(x')=c(x)\}$ even if $x'$ are not the correct word token $x$. Intuitively, cluster prediction is an ``easier'' task, but actually benefits the model through an easy-to-hard curriculum and multi-stage decomposition: the model is allowed to make (moderate) mistakes in the earlier mask-to-cluster stage, as long as it can be corrected in the later cluster-to-token stage. 
To verify this hypothesis, we introduce a \textit{hard training mode}, where we replace the cluster-level cross-entropy with the token-level cross-entropy (within the whole vocabulary) during training, leading to a harder task: 
\begin{equation}
    \mathcal L_\text{Hard}(\mathbf x,\mathbf x_\theta, t)=\delta_{z_t,c}w_{t,c}\text{CE}(\mathbf x, \frac{\mathbf x_\theta \odot (\Gamma^\top \Gamma \mathbf x)}{\mathbf x_\theta^\top \Gamma^\top \Gamma \mathbf x})+\delta_{z_t,m}w_{t,m} \text{CE}(\mathbf x, \mathbf x_\theta)
\end{equation}
It is straightforward that $\text{CE}(\mathbf x, \mathbf x_\theta)\geq \text{CE}(\Gamma\mathbf x, \Gamma\mathbf x_\theta) $, so hard training mode actually utilizes a looser bound of the elbo as the training loss; the calculation of evaluation elbo remains the same. As shown in \Cref{tab:ppl_ablation_training}, the hard training mode indeed negatively affects the performance slightly, suggesting the advantage of progressively denoising and robustness of the original HDLM formulation.

\paragraph{Auxiliary loss.}

We apply the reverse process derived from the Bayesian perspective, so the model always outputs the word token prediction $\mathbf x_\theta(z_t, t)$ for all inputs at different noise levels, and the cluster-level prediction is derived through the Bayesian posterior $\Gamma \mathbf x_\theta$ instead of using another classifier head to directly predict the logits of clusters.
As an ablation, we append an auxiliary cluster-level classifier head to the model which outputs ${\mathbf c}_\theta(z_t,t)$. Together with the main network, the auxiliary classifier is trained with the original elbo and an auxiliary cluster cross-entropy loss $\text{CE}(\Gamma\mathbf x, \mathbf c_\theta(z_t, t))$. We observe mild performance drop when using this direct cluster prediction instead of the Bayesian posterior (\Cref{tab:ppl_ablation_training}), demonstrating the advantage of self-consistent elbo training and posterior inference.

\paragraph{Flexible loss weights.} 

We also find that clipping the large and extremely small loss weights based on heuristics help in stabilizing optimization and improving performance.
Actually, such weight manipulation techniques can be commonly found across broad diffusion literature~\citep{karras2024analyzing, kingma2023understanding, gidd}. Intuitively, these weights can be thought of as hyperparameters over the noise schedule, which control how much effort the model devotes to the corresponding tasks during training. A careful balance, ideally through empirical tryouts, could often lead to better results~\citep{lu2024simplifying}.

\paragraph{Force transition in decoding.}
When predicting the word token from a cluster token, the naive model prediction $\mathbf x_\theta$ has logits for all tokens even outside the cluster. To keep consistent with the forward process, we can optionally restrict the model to decode the cluster tokens only to the word tokens within the clusters, which we call \textit{force transition} (within the cluster). We adopt force transition decoding strategy by default and find it consistently helpful; see ablations for more discussion. 

\section{Experiments}

\subsection{Experimental Setup}
\label{subsec:exp_setup}
In our experiments, we focus on language modeling, training the proposed HDLM algorithm on the widely used OpenWebText (OWT)~\citep{Gokaslan2019OpenWeb} dataset following~\citep{gidd}. We adopt the settings in prior work~\citep{simpleMDLM,gidd,SEDDratio,shi2024simplified} in terms of architecture and scale - DiT~\citep{dit} of small (170M) and base (425M) sizes with GPT-2~\citep{gpt2} tokenizer on 131B tokens of training data. 
We take 100000 data out of 8013769 data as the validation set. Following~\citep{gidd}, we use a context length of 512 tokens and do not use sentence packing.
We evaluate the trained models via both validation perplexity (Valid. PPL) and generative perplexity (Gen. PPL), which are standard metrics in recent literature.
Other details and additional results are deferred to \Cref{sec:exp_appendix}.

\subsection{Main Results}

\begin{table}[t]
\begin{center}
\renewcommand{\arraystretch}{1.15}
\caption{Validation perplexity and generative perplexity on OWT. The numbers of trained tokens are also reported for fair comparison. Our small models consistently surpass other discrete diffusion counterparts, and our base model is able to match the performance of autoregressive models.}
\label{tab:ppl}
\begin{tabular}{l|ccc}
\toprule
Model & Train. toks. & Valid. PPL ($\downarrow$) & Gen. PPL ($\downarrow$)\\
\midrule
GPT2~\citep{gpt2}$^\dagger$ & unk. & 23.40 & - \\
Llama110M (retrain.)$^\dagger$ & 262B & 16.11 & - \\
\midrule
SEDD~\citep{SEDDratio}$^\dagger$ & 262B & $\leq$24.10 & -\\
MDLM-small~\citep{simpleMDLM} (reimpl.) & 131B & $\leq$27.39 & 163.7 \\
GIDD+-small~\citep{gidd} (reimpl.) & 131B & $\leq$25.82 & 170.2 \\
\midrule
HDLM-small-64 & 131B & $\leq$23.36 & \textbf{144.2}\\
HDLM-small-128 & 131B & $\leq$\textbf{23.25} & 148.0\\
\midrule 
HDLM-base-128 & 131B & $\leq$\underline{19.22} & \underline{139.9}\\
\bottomrule
\end{tabular}
\vspace{-8pt}
\end{center}
\end{table}

For Valid. PPL, we report the perplexity on the OWT validation set. For Gen. PPL, we generate 256 samples with length 512 using 512 denoising steps, and leverage
gpt2-large~\citep{gpt2} as the reference model. We include both autoregressive models and the main discrete diffusion models~\citep{SEDDratio, simpleMDLM, gidd} as baselines, and the results  are adopted from the corresponding work (marked with $^\dagger$) or reproduced in our settings (noted with reimpl.) to keep consistency. An HDLM with $n$ clusters is denoted as HDLM-$n$, and we also attach their sizes. Our main results are shown in \Cref{tab:ppl}.
Remarkably, HDLMs with small size outperform other discrete diffusion variants in terms of both validation perplexity and generative perplexity. Moreover, our base HDLM achieves $19.77$ perplexity, surpassing or matching the autoregressive models.

\begin{table}[ht]
\begin{center}
\caption{Ablations of HDLM with different numbers of clusters $n$. We report the validation perplexity and generative perplexity on OWT, and all models are trained for 131B tokens. HDLM with cluster numbers around $64$ and $128$ achieve the best performance, while $n=1$ recovers MDLM performance.}
\label{tab:ppl_ablation_n}
\begin{tabular}{l|ccc}
\toprule
Model & $n$ & Valid. PPL ($\downarrow$) & Gen. PPL ($\downarrow$)\\
\midrule
\multirow{9}{*}{\parbox{2.2cm}{HDLM-small}} & 1 & $\leq$25.72 & 163.9\\
& 2 & $\leq$25.79 & 160.2\\
& 4 & $\leq$25.05 & 162.0\\
& 8 & $\leq$24.57 & 146.6\\
& 16 & $\leq$24.17 & 155.2\\
& 32 & $\leq$23.83 & 154.6\\
& 64 & $\leq$23.36 & \textbf{144.2}\\
& 128 & $\leq$\textbf{23.25} & 148.0\\
& 256 & $\leq$23.65 & 150.4\\
\bottomrule
\end{tabular}
\vspace{-8pt}
\end{center}
\end{table}

\subsection{Ablation Studies}

We conduct comprehensive ablations on HDLM w.r.t. the number of clusters $n$ (\Cref{tab:ppl_ablation_n}), the perturbation probability $\xi$ (\Cref{tab:ppl_ablation_xi}), the forward process noise schedule controlled by $\gamma$ (\Cref{tab:ppl_ablation_gamma}), and different training modes (\Cref{tab:ppl_ablation_training}). Without specifying, we use HDLM-small and set $n=64, \xi=1, \gamma=1$ and standard training mode by default. Full results are presented in \Cref{sec:exp_appendix}.

One could observe in \Cref{tab:ppl_ablation_n} that as the number of clusters increases, the validation PPL first increases and then decreases, suggesting an optimal and moderate range for suitable numbers of clusters: approximately the square root of the vocabulary size which divide the generation into two stages with approximately equal complexities. We also verify empirically that HDLM with one cluster recovers the performance of MDLM, which is consistent with our theoretical analysis.

\begin{table}[ht]
\begin{center}
\renewcommand{\arraystretch}{1.15}
\caption{Ablations of HDLM-$64$ with different perturbation probability $\xi$. We report the validation perplexity and generative perplexity on OWT, and all models are trained for 131B tokens. All models are forced transition within the cluster, which significantly reduces the Gen. PPL when $\xi<1.0$.}
\label{tab:ppl_ablation_xi} 
\begin{tabular}{l|ccc}
\toprule
Model & $\xi$ & Valid. PPL ($\downarrow$) & Gen. PPL ($\downarrow$)\\
\midrule
\multirow{3}{*}{\parbox{2.7cm}{HDLM-small-64}} & 1.0 & $\leq$23.36 & 144.2\\
& 0.9 & $\leq$23.54 & 69.76\\
& 0.8 & $\leq$25.93 & 54.15\\
\bottomrule
\end{tabular}
\vspace{-12pt}
\end{center}
\end{table}

As illustrated in \Cref{tab:ppl_ablation_xi}, stochastic perturbations introduce harder denoising target in training, which improves the model's ability to deal with inaccurate contexts and thus benefits the real inference procedure: the generative PPL reduces over $51\%$ with $\xi=0.9$ and over $62\%$ with $\xi=0.8$.
Intriguingly, force transition decoding strategy is effective even for $\xi<1$. Intuitively, force transition would lead to better performance when the cluster prediction is accurate, yet otherwise will increase the accumulation prediction error. Our empirical results imply that the robustness to inaccurate contexts may be more important than correcting the cluster tokens in-place. For reference, the Gen. PPL is $184.0$ for $\xi=0.9$ without force transition.

\section{Related Work}
Researchers have been looking for alternatives of autoregressive models for text despite their great success, among which diffusion models are representatives. Early work in this line often leverage continuous diffusion~\citep{li2022diffusion} or combine diffusion with AR models~\citep{han2022ssd, gong2022diffuseq}. Building on foundational work of discrete diffusion~\citep{austin2021structured}, more recent work such as SEDD~\citep{SEDDratio}, MDLM~\citep{simpleMDLM}, MD4~\citep{shi2024simplified} demonstrate the effectiveness of discrete diffusion in text modeling in small-scale pretraining, which is also our focus. In particular, SEDD~\citep{SEDDratio} learns the concrete score (as the ratio of marginal distributions), while MDLM~\citep{simpleMDLM} and MD4~\citep{shi2024simplified} further simplify the training objective for the masking forward process. Larger-scale work LLaDA~\citep{nie2025largelanguagediffusionmodels} and Dream~\citep{dream7b} demonstrate the scaling potential of diffusion language models, matching state-of-the-art AR models with the same scales in practical math reasoning~\citep{llada1.5}, coding~\citep{diffucoder}, or multimodal tasks~\citep{lavida, mmada}. Most of the aforementioned recent discrete diffusion language models are masked diffusion due to the simplicity and empirical effectiveness, yet at the cost of losing self-correction ability. While there are also discrete diffusion with uniform noises~\citep{schiff2024simple, duo}, they typically underperform the discrete counterparts. GIDD~\citep{gidd} unifies the two types by establishing a general framework that combines both noises. Our paper, by contrast, introduce a novel noising procedure that is both conceptually simple and practically effective, while preserving the self-correction ability. \cite{xu2024energy, gong2024scaling} further investigate the integration of discrete diffusion with AR models. Block Diffusion~\citep{arriola2025block} explores another new paradigm to autoregressively generate text blocks and diffuse text within each block. Our work acts on a different direction, as the length of sequence is fixed but the semantic of each token is generated sequentially.

\section{Conclusion}

We introduce Hierarchical Diffusion Language Model (HDLM), a generic and flexible family of discrete diffusion model with hierarchical semantic structures. Intuitively, HDLM follows a ``next semantic scale prediction'' scheme to progressively decode high-level abstract tokens to low-level detailed tokens. Leveraging the continuous-time Markov chain framework, we elaborate on the forward and backward process of hierarchical discrete diffusion, derive strict diffusion training ELBO and formally demonstrate their properties. Based on theoretical insights and empirical observations, we propose design principles and practical training techniques for HDLM. Our extensive experiments demonstrate the strong language modeling capability of HDLM.


\subsection*{Acknowledgment} C. Z. and S. B. were partly supported by the ARPA-H ADAPT program. C. Z., C. W., S. T. and T. J. were partly supported by the Machine Learning for Pharmaceutical Discovery and Synthesis (MLPDS) consortium, the NSF Expeditions grant (award 1918839) Understanding the World Through Code, and the DSO Singapore grant on next generation techniques for protein ligand binding.

\bibliography{ref}

\begin{thebibliography}{10}

\bibitem{gpt4}
Josh Achiam, Steven Adler, Sandhini Agarwal, Lama Ahmad, Ilge Akkaya, Florencia~Leoni Aleman, Diogo Almeida, Janko Altenschmidt, Sam Altman, Shyamal Anadkat, et~al.
\newblock Gpt-4 technical report.
\newblock {\em arXiv preprint arXiv:2303.08774}, 2023.

\bibitem{arriola2025block}
Marianne Arriola, Aaron Gokaslan, Justin~T Chiu, Zhihan Yang, Zhixuan Qi, Jiaqi Han, Subham~Sekhar Sahoo, and Volodymyr Kuleshov.
\newblock Block diffusion: Interpolating between autoregressive and diffusion language models.
\newblock {\em arXiv preprint arXiv:2503.09573}, 2025.

\bibitem{austin2021structured}
Jacob Austin, Daniel~D Johnson, Jonathan Ho, Daniel Tarlow, and Rianne Van Den~Berg.
\newblock Structured denoising diffusion models in discrete state-spaces.
\newblock {\em Advances in neural information processing systems}, 34:17981--17993, 2021.

\bibitem{bisk2019piqa}
Yonatan Bisk, Rowan Zellers, Ronan Le~Bras, Jianfeng Gao, and Yejin Choi.
\newblock Piqa: Reasoning about physical commonsense in natural language.
\newblock {\em arXiv preprint arXiv:1911.11641}, 2019.

\bibitem{lm1b}
Ciprian Chelba, Tomas Mikolov, Mike Schuster, Qi~Ge, Thorsten Brants, Phillipp Koehn, and Tony Robinson.
\newblock One billion word benchmark for measuring progress in statistical language modeling.
\newblock {\em arXiv preprint arXiv:1312.3005}, 2013.

\bibitem{clark2019boolq}
Christopher Clark, Kenton Lee, Ming-Wei Chang, Tom Kwiatkowski, Michael Collins, and Kristina Toutanova.
\newblock Boolq: Exploring the surprising difficulty of natural yes/no questions.
\newblock In {\em Proceedings of NAACL-HLT}, pages 2924--2936, Minneapolis, Minnesota, 2019. Association for Computational Linguistics.

\bibitem{clark2018arc}
Peter Clark, Isaac Cowhey, Oren Etzioni, Tushar Khot, Ashish Sabharwal, Carissa Schoenick, and Oyvind Tafjord.
\newblock Think you have solved question answering? try arc, the ai2 reasoning challenge.
\newblock {\em arXiv preprint arXiv:1803.05457}, 2018.

\bibitem{Gokaslan2019OpenWeb}
Aaron Gokaslan and Vanya Cohen.
\newblock Openwebtext corpus.
\newblock \url{http://Skylion007.github.io/OpenWebTextCorpus}, 2019.

\bibitem{gong2024scaling}
Shansan Gong, Shivam Agarwal, Yizhe Zhang, Jiacheng Ye, Lin Zheng, Mukai Li, Chenxin An, Peilin Zhao, Wei Bi, Jiawei Han, et~al.
\newblock Scaling diffusion language models via adaptation from autoregressive models.
\newblock {\em arXiv preprint arXiv:2410.17891}, 2024.

\bibitem{gong2022diffuseq}
Shansan Gong, Mukai Li, Jiangtao Feng, Zhiyong Wu, and LingPeng Kong.
\newblock Diffuseq: Sequence to sequence text generation with diffusion models.
\newblock {\em arXiv preprint arXiv:2210.08933}, 2022.

\bibitem{diffucoder}
Shansan Gong, Ruixiang Zhang, Huangjie Zheng, Jiatao Gu, Navdeep Jaitly, Lingpeng Kong, and Yizhe Zhang.
\newblock Diffucoder: Understanding and improving masked diffusion models for code generation.
\newblock {\em arXiv preprint arXiv:2506.20639}, 2025.

\bibitem{han2022ssd}
Xiaochuang Han, Sachin Kumar, and Yulia Tsvetkov.
\newblock Ssd-lm: Semi-autoregressive simplex-based diffusion language model for text generation and modular control.
\newblock {\em arXiv preprint arXiv:2210.17432}, 2022.

\bibitem{ho2020denoising}
Jonathan Ho, Ajay Jain, and Pieter Abbeel.
\newblock Denoising diffusion probabilistic models.
\newblock {\em Advances in neural information processing systems}, 33:6840--6851, 2020.

\bibitem{karras2024analyzing}
Tero Karras, Miika Aittala, Jaakko Lehtinen, Janne Hellsten, Timo Aila, and Samuli Laine.
\newblock Analyzing and improving the training dynamics of diffusion models.
\newblock In {\em Proceedings of the IEEE/CVF Conference on Computer Vision and Pattern Recognition}, pages 24174--24184, 2024.

\bibitem{kingma2023understanding}
Diederik Kingma and Ruiqi Gao.
\newblock Understanding diffusion objectives as the elbo with simple data augmentation.
\newblock {\em Advances in Neural Information Processing Systems}, 36:65484--65516, 2023.

\bibitem{adam}
Diederik~P Kingma.
\newblock Adam: A method for stochastic optimization.
\newblock {\em arXiv preprint arXiv:1412.6980}, 2014.

\bibitem{lavida}
Shufan Li, Konstantinos Kallidromitis, Hritik Bansal, Akash Gokul, Yusuke Kato, Kazuki Kozuka, Jason Kuen, Zhe Lin, Kai-Wei Chang, and Aditya Grover.
\newblock Lavida: A large diffusion language model for multimodal understanding.
\newblock {\em arXiv preprint arXiv:2505.16839}, 2025.

\bibitem{li2022diffusion}
Xiang Li, John Thickstun, Ishaan Gulrajani, Percy~S Liang, and Tatsunori~B Hashimoto.
\newblock Diffusion-lm improves controllable text generation.
\newblock {\em Advances in neural information processing systems}, 35:4328--4343, 2022.

\bibitem{deepseekv3}
Aixin Liu, Bei Feng, Bing Xue, Bingxuan Wang, Bochao Wu, Chengda Lu, Chenggang Zhao, Chengqi Deng, Chenyu Zhang, Chong Ruan, et~al.
\newblock Deepseek-v3 technical report.
\newblock {\em arXiv preprint arXiv:2412.19437}, 2024.

\bibitem{SEDDratio}
Aaron Lou, Chenlin Meng, and Stefano Ermon.
\newblock Discrete diffusion modeling by estimating the ratios of the data distribution.
\newblock {\em arXiv preprint arXiv:2310.16834}, 2023.

\bibitem{lu2024simplifying}
Cheng Lu and Yang Song.
\newblock Simplifying, stabilizing and scaling continuous-time consistency models.
\newblock {\em arXiv preprint arXiv:2410.11081}, 2024.

\bibitem{mihaylov2018openbookqa}
Todor Mihaylov, Peter Clark, Tushar Khot, and Ashish Sabharwal.
\newblock Can a suit of armor conduct electricity? a new dataset for open book question answering.
\newblock In {\em Proceedings of EMNLP}, pages 2381--2391, Brussels, Belgium, 2018. Association for Computational Linguistics.

\bibitem{nie2025largelanguagediffusionmodels}
Shen Nie, Fengqi Zhu, Zebin You, Xiaolu Zhang, Jingyang Ou, Jun Hu, Jun Zhou, Yankai Lin, Ji-Rong Wen, and Chongxuan Li.
\newblock Large language diffusion models, 2025.

\bibitem{dit}
William Peebles and Saining Xie.
\newblock Scalable diffusion models with transformers.
\newblock In {\em Proceedings of the IEEE/CVF international conference on computer vision}, pages 4195--4205, 2023.

\bibitem{gpt2}
Alec Radford, Jeffrey Wu, Rewon Child, David Luan, Dario Amodei, and Ilya Sutskever.
\newblock Language models are unsupervised multitask learners.
\newblock {\em OpenAI}, 2019.
\newblock Accessed: 2024-11-15.

\bibitem{simpleMDLM}
Subham Sahoo, Marianne Arriola, Yair Schiff, Aaron Gokaslan, Edgar Marroquin, Justin Chiu, Alexander Rush, and Volodymyr Kuleshov.
\newblock Simple and effective masked diffusion language models.
\newblock {\em Advances in Neural Information Processing Systems}, 37:130136--130184, 2024.

\bibitem{duo}
Subham~Sekhar Sahoo, Justin Deschenaux, Aaron Gokaslan, Guanghan Wang, Justin Chiu, and Volodymyr Kuleshov.
\newblock The diffusion duality.
\newblock {\em arXiv preprint arXiv:2506.10892}, 2025.

\bibitem{sakaguchi2019winogrande}
Keisuke Sakaguchi, Ronan Le~Bras, Chandra Bhagavatula, and Yejin Choi.
\newblock Winogrande: An adversarial winograd schema challenge at scale.
\newblock {\em arXiv preprint arXiv:1907.10641}, 2019.

\bibitem{schiff2024simple}
Yair Schiff, Subham~Sekhar Sahoo, Hao Phung, Guanghan Wang, Sam Boshar, Hugo Dalla-torre, Bernardo~P de~Almeida, Alexander Rush, Thomas Pierrot, and Volodymyr Kuleshov.
\newblock Simple guidance mechanisms for discrete diffusion models.
\newblock {\em arXiv preprint arXiv:2412.10193}, 2024.

\bibitem{shi2024simplified}
Jiaxin Shi, Kehang Han, Zhe Wang, Arnaud Doucet, and Michalis Titsias.
\newblock Simplified and generalized masked diffusion for discrete data.
\newblock {\em Advances in neural information processing systems}, 37:103131--103167, 2024.

\bibitem{GenerativeGradients}
Yang Song and Stefano Ermon.
\newblock Generative modeling by estimating gradients of the data distribution.
\newblock {\em ArXiv}, abs/1907.05600, 2019.

\bibitem{ScoreBasedSDE}
Yang Song, Jascha~Narain Sohl-Dickstein, Diederik~P. Kingma, Abhishek Kumar, Stefano Ermon, and Ben Poole.
\newblock Score-based generative modeling through stochastic differential equations.
\newblock {\em ArXiv}, abs/2011.13456, 2020.

\bibitem{var}
Keyu Tian, Yi~Jiang, Zehuan Yuan, Bingyue Peng, and Liwei Wang.
\newblock Visual autoregressive modeling: Scalable image generation via next-scale prediction.
\newblock {\em Advances in neural information processing systems}, 37, 2024.

\bibitem{gidd}
Dimitri von R{\"u}tte, Janis Fluri, Yuhui Ding, Antonio Orvieto, Bernhard Sch{\"o}lkopf, and Thomas Hofmann.
\newblock Generalized interpolating discrete diffusion.
\newblock {\em arXiv preprint arXiv:2503.04482}, 2025.

\bibitem{xu2024energy}
Minkai Xu, Tomas Geffner, Karsten Kreis, Weili Nie, Yilun Xu, Jure Leskovec, Stefano Ermon, and Arash Vahdat.
\newblock Energy-based diffusion language models for text generation.
\newblock {\em arXiv preprint arXiv:2410.21357}, 2024.

\bibitem{qwen25}
An~Yang, Baosong Yang, Beichen Zhang, Binyuan Hui, Bo~Zheng, Bowen Yu, Chengyuan Li, Dayiheng Liu, Fei Huang, Haoran Wei, et~al.
\newblock Qwen2. 5 technical report.
\newblock {\em arXiv preprint arXiv:2412.15115}, 2024.

\bibitem{mmada}
Ling Yang, Ye~Tian, Bowen Li, Xinchen Zhang, Ke~Shen, Yunhai Tong, and Mengdi Wang.
\newblock Mmada: Multimodal large diffusion language models.
\newblock {\em arXiv preprint arXiv:2505.15809}, 2025.

\bibitem{dream7b}
Jiacheng Ye, Zhihui Xie, Lin Zheng, Jiahui Gao, Zirui Wu, Xin Jiang, Zhenguo Li, and Lingpeng Kong.
\newblock Dream 7b: Diffusion large language models.
\newblock {\em arXiv preprint arXiv:2508.15487}, 2025.

\bibitem{deepsets}
Manzil Zaheer, Satwik Kottur, Siamak Ravanbakhsh, Barnabas Poczos, Russ~R Salakhutdinov, and Alexander~J Smola.
\newblock Deep sets.
\newblock {\em Advances in neural information processing systems}, 30, 2017.

\bibitem{zellers2019hellaswag}
Rowan Zellers, Ari Holtzman, Yonatan Bisk, Ali Farhadi, and Yejin Choi.
\newblock Hellaswag: Can a machine really finish your sentence?
\newblock In {\em Proceedings of the 57th Annual Meeting of the Association for Computational Linguistics}, pages 4791--4800, Florence, Italy, 2019. Association for Computational Linguistics.

\bibitem{llada1.5}
Fengqi Zhu, Rongzhen Wang, Shen Nie, Xiaolu Zhang, Chunwei Wu, Jun Hu, Jun Zhou, Jianfei Chen, Yankai Lin, Ji-Rong Wen, et~al.
\newblock Llada 1.5: Variance-reduced preference optimization for large language diffusion models.
\newblock {\em arXiv preprint arXiv:2505.19223}, 2025.

\end{thebibliography}
\bibliographystyle{plain}

\newpage
\appendix

\section{Proofs}\label{sec:proof_appendx}

\subsection{Proof and Analysis of Standard HDLM ELBO}
\paragraph{Proof of HDLM ELBO.} We start with proving our main result: the ELBO of standard HDLM.
\begin{theorem}[Closed-form CT-ELBO for HDLM with hierarchical CTMC diffusion process and block conditional transition, \Cref{theorem_elbo} in the main text]
    \begin{align}
    \log p(x)&\geq \mathbb E_{t, z_t}\bigg[\delta_{z_t, c}\frac{-\alpha_t'}{\beta_{t,c}}\log \Big(\frac{p_\theta(x)}{\displaystyle\sum_{x': \Gamma \mathbf x'=\Gamma \mathbf x}p_\theta(x')}\Big)+\delta_{z_t, m}\frac{\beta_{t,m}'}{\beta_{t,m}}\log \Big(\sum_{x': \Gamma \mathbf x'=\Gamma \mathbf x}p_\theta(x')\Big)\bigg]\notag\\&-\underbrace{\mathbb E_{t,z_t}[\frac{\alpha_t'}{\alpha_t}\delta_{z_t,x}+\frac{\beta_{t,c}'}{\beta_{t,c}}\delta_{z_t, c}+\frac{\beta_{t,m}'}{\beta_{t,m}}\delta_{z_t,m}]}_{=0}\\&=\mathbb E_{t, z_t}\bigg[\delta_{z_t, c}\frac{-\alpha_t'}{\beta_{t,c}}\mathbf x^\top \log \frac{\mathbf x_\theta \odot (\Gamma^\top \Gamma \mathbf x)}{\mathbf x_\theta^\top \Gamma^\top \Gamma \mathbf x}+\delta_{z_t, m}\frac{\beta_{t,m}'}{\beta_{t,m}}(\Gamma \mathbf x)^\top \log( \Gamma \mathbf x_\theta)\bigg]\\&=-\mathbb E_{t,z_t} \Big[\delta_{z_t,c} w_{t,c} {\rm CE}(\mathbf x, \frac{\mathbf x_\theta \odot (\Gamma^\top \Gamma \mathbf x)}{\mathbf x_\theta^\top \Gamma^\top \Gamma \mathbf x})+\delta_{z_t,m}w_{t,m}{\rm CE}(\Gamma\mathbf x, \Gamma \mathbf x_\theta)\Big]
\end{align}
where $p_\theta(x_i)=\mathbf x_\theta [i]$ is the predicted probability of token $x_i$, $\odot$ is the Hadamard (element-wise) product, and
\begin{equation}
    w_{t,c}=\frac{-\alpha_t'}{\beta_{t,c}}, ~w_{t,m}=\frac{\beta_{t,m}'}{\beta_{t,m}}
\end{equation}
\end{theorem}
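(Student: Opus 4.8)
The plan is to start from the general CTMC ELBO in \Cref{lemma_elbo_rate}, specialize it to the hierarchical forward process, and simplify term by term. First I would note that by \Cref{prop:hdlm_conditional} the forward generator $Q_t$ is block upper-triangular with only three kinds of nonzero off-diagonal transitions: word $\to$ cluster (at rate $-\alpha_t'/\alpha_t$ into the correct cluster), cluster $\to$ mask (at rate $(\alpha_t'+\beta_{t,c}')/\beta_{t,c}$), and the induced diagonal entries. Consequently, for a fixed noisy state $z_t$ the inner sum $\sum_{z_s\neq z_t}Q_t(z_s,z_t)\frac{q_t(z_s|x)}{q_t(z_t|x)}(\cdots)$ collapses dramatically: if $z_t$ is a word token it receives no mass from states with smaller index, so that term contributes nothing beyond the diagonal; if $z_t=c$ (the correct cluster $c(x)$) the only predecessor state $z_s$ with positive forward rate into $z_t$ and with $q_t(z_s|x)>0$ is $z_s=x$; and if $z_t=m$ the only relevant predecessor is $z_s=c(x)$. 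So the whole expectation reduces to the two indicator terms $\delta_{z_t,c}$ and $\delta_{z_t,m}$ plus the $-\sum_{z'}Q_t(z',z_t)\frac{q_t(z'|\mathbf x_\theta)}{q_t(z_t|\mathbf x_\theta)}$ correction.

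Next I would compute the ratios of marginals explicitly. Using $q_t(z_t|x)={\rm Cat}(z_t;\alpha_t\mathbf x+\beta_{t,c}\mathbf c(\mathbf x)+\beta_{t,m}\mathbf m)$, one gets $q_t(x|x)=\alpha_t$, $q_t(c(x)|x)=\beta_{t,c}$, $q_t(m|x)=\beta_{t,m}$, so $\frac{q_t(z_s|x)}{q_t(z_t|x)}$ for the two surviving pairs equals $\alpha_t/\beta_{t,c}$ (for $z_s=x,z_t=c$) and $\beta_{t,c}/\beta_{t,m}$ (for $z_s=c,z_t=m$). For the model-side marginals, $q_t(c(x)|\mathbf x_\theta)=\sum_{x':\Gamma\mathbf x'=\Gamma\mathbf x}\alpha_t\,\mathbf x_\theta[x']+\beta_{t,c}\cdot(\text{does }\mathbf x_\theta\text{ place a cluster? no})$; more carefully, since $\mathbf x_\theta$ is supported on word tokens, $q_t(\cdot|\mathbf x_\theta)=\sum_i\mathbf x_\theta[i]\,q_t(\cdot|x_i)$ gives $q_t(c_j|\mathbf x_\theta)=\beta_{t,c}\sum_{i:c(x_i)=c_j}\mathbf x_\theta[i]$ and $q_t(m|\mathbf x_\theta)=\beta_{t,m}$ and $q_t(x_i|\mathbf x_\theta)=\alpha_t\mathbf x_\theta[i]$. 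Plugging these into the log-ratio $\log\frac{q_t(z_s|\mathbf x_\theta)q_t(z_t|x)}{q_t(z_t|\mathbf x_\theta)q_t(z_s|x)}$ and letting the $\alpha_t,\beta_{t,c},\beta_{t,m}$ factors cancel, the $z_t=c$ term becomes $\frac{-\alpha_t'}{\alpha_t}\cdot\frac{\alpha_t}{\beta_{t,c}}\log\frac{\mathbf x_\theta[x]}{\sum_{x':\Gamma\mathbf x'=\Gamma\mathbf x}\mathbf x_\theta[x']}=\frac{-\alpha_t'}{\beta_{t,c}}\log\frac{p_\theta(x)}{\sum_{x'\in c(x)}p_\theta(x')}$, and the $z_t=m$ term becomes $\frac{\alpha_t'+\beta_{t,c}'}{\beta_{t,c}}\cdot\frac{\beta_{t,c}}{\beta_{t,m}}\log\frac{\beta_{t,c}\sum_{x'\in c(x)}\mathbf x_\theta[x']\cdot\beta_{t,m}}{\beta_{t,m}\cdot\beta_{t,c}}$, which after the algebra and using $\alpha_t+\beta_{t,c}+\beta_{t,m}=1\Rightarrow \alpha_t'+\beta_{t,c}'+\beta_{t,m}'=0$ collapses to $\frac{\beta_{t,m}'}{\beta_{t,m}}\log\sum_{x'\in c(x)}p_\theta(x')$. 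This yields the first displayed line.

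For the $-\sum_{z'}Q_t(z',z_t)\frac{q_t(z'|\mathbf x_\theta)}{q_t(z_t|\mathbf x_\theta)}$ term I would show it equals the bracket labeled ``$=0$''. Expanding $\sum_{z'}Q_t(z',z_t)q_t(z'|\mathbf x_\theta)$ is exactly the $z_t$-component of $Q_t^\top q_t(\cdot|\mathbf x_\theta)=\frac{d}{dt}q_t(\cdot|\mathbf x_\theta)$ by the Kolmogorov forward equation, and since $q_t(z_t|\mathbf x_\theta)$ equals $\alpha_t\mathbf x_\theta[z_t]$, $\beta_{t,c}(\cdots)$, or $\beta_{t,m}$ in the three regimes, the ratio $\frac{\dot q_t(z_t|\mathbf x_\theta)}{q_t(z_t|\mathbf x_\theta)}$ is $\frac{\alpha_t'}{\alpha_t}$, $\frac{\beta_{t,c}'}{\beta_{t,c}}$, or $\frac{\beta_{t,m}'}{\beta_{t,m}}$ respectively — matching the three $\delta$ terms. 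That this expectation over $t\sim\mathcal U(0,1)$ and $z_t\sim q_t(\cdot|x)$ vanishes is the content of (and proved alongside) \Cref{prop:invariance_new_weights}: $\mathbb E_{z_t}[\delta_{z_t,x}]=\alpha_t$, $\mathbb E_{z_t}[\delta_{z_t,c}]=\beta_{t,c}$, $\mathbb E_{z_t}[\delta_{z_t,m}]=\beta_{t,m}$, so the three weights become $\alpha_t'+\beta_{t,c}'+\beta_{t,m}'=0$ and integrating over $t$ keeps it zero. Finally, rewriting $\log\frac{p_\theta(x)}{\sum_{x'\in c(x)}p_\theta(x')}=\mathbf x^\top\log\frac{\mathbf x_\theta\odot(\Gamma^\top\Gamma\mathbf x)}{\mathbf x_\theta^\top\Gamma^\top\Gamma\mathbf x}$ (the numerator vector's nonzero entries are $\mathbf x_\theta[x']$ for $x'\in c(x)$, and $\mathbf x^\top$ picks out the $x$ coordinate) and $\log\sum_{x'\in c(x)}p_\theta(x')=(\Gamma\mathbf x)^\top\log(\Gamma\mathbf x_\theta)$ (since $\Gamma\mathbf x$ is the one-hot of $c(x)$ and $(\Gamma\mathbf x_\theta)_{c(x)}=\sum_{x'\in c(x)}p_\theta(x')$), and recognizing each as a negative cross-entropy, gives the last two lines. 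The main obstacle is bookkeeping the support-collapse of the double sum correctly — in particular being careful that $q_t(z_s|x)>0$ forces $z_s\in\{x,c(x),m\}$ so only two off-diagonal pairs survive, and that the cancellation in the $z_t=m$ term genuinely uses $\alpha_t'+\beta_{t,c}'+\beta_{t,m}'=0$ rather than being an unconstrained simplification.
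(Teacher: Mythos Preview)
Your proposal is correct and follows essentially the same route as the paper: both start from \Cref{lemma_elbo_rate}, collapse the sum over $z_s$ by case analysis on $z_t\in\{x,c(x),m\}$ using the block structure of $Q_t$, verify that the additive constant vanishes in expectation, and then rewrite the two surviving log terms in matrix/cross-entropy form. The one genuine variation is how you handle the correction term $-\sum_{z'}Q_t(z',z_t)\tfrac{q_t(z'|\mathbf x_\theta)}{q_t(z_t|\mathbf x_\theta)}$: the paper evaluates it by direct case-by-case computation, whereas you recognize it as $-\partial_t\log q_t(z_t|\mathbf x_\theta)$ via the Kolmogorov forward equation, which then immediately reads off as $\alpha_t'/\alpha_t$, $\beta_{t,c}'/\beta_{t,c}$, or $\beta_{t,m}'/\beta_{t,m}$ --- a clean conceptual shortcut that would also generalize effortlessly to the multi-hierarchy case.

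Two cosmetic points that do not affect correctness: the cluster$\to$mask rate is $-(\alpha_t'+\beta_{t,c}')/\beta_{t,c}$, not $(\alpha_t'+\beta_{t,c}')/\beta_{t,c}$ (your final expression absorbs the sign correctly via $-(\alpha_t'+\beta_{t,c}')=\beta_{t,m}'$); and the vanishing of the bracket is not really the content of \Cref{prop:invariance_new_weights} but the pointwise identity $\alpha_t'+\beta_{t,c}'+\beta_{t,m}'=0$, which is in fact the argument you give.
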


\begin{proof}
By \Cref{lemma_elbo_rate}, we plugin $Q_t$ in \Cref{eq:generator} into \Cref{eq:elbo_rate}. We calculate the following Equation (*) for different $z_t\in\{x,c(x), m\}$ respectively:
\begin{equation}
    \sum_{z_s\neq z_t}Q_t(z_s,z_t)\frac{q_t(z_s|x)}{q_t(z_t|x)}\log\frac{q_t(z_s|\mathbf x_\theta)q_t(z_t|x)}{q_t(z_t|\mathbf x_\theta)q_t(z_s|x)}-\sum_{z'}Q_t(z',z_t)\frac{q_t(z'|\mathbf x_\theta)}{q_t(z_t|\mathbf x_\theta)} \tag{*}
\end{equation}
(i) $z_t=m$, only when $z_s=c(x)$ that $Q_t(z_s, z_t)\neq 0$ and $q_t(z_s|x)\neq 0$, thus the first term being nonzero; when $z'\in c$, the second term is nonzero. 
\begin{align}
    *&=\frac{-(\alpha_t'+\beta_{t,c}')}{\beta_{t,c}}\frac{\beta_{t,c}}{\beta_{t,m}}\log \bigg(\frac{\beta_{t,c}\cdot \displaystyle\sum_{x':\Gamma \mathbf x'=\Gamma \mathbf x}p_\theta(x')}{\beta_{t,m}\cdot 1}\frac{\beta_{t,m}}{\beta_{t,c}}\bigg)-\frac{-(\alpha_t'+\beta_{t,c}')}{\beta_{t,c}}\frac{\beta_{t,c}\cdot \displaystyle\sum_{z'\in c}\sum_{x':\Gamma \mathbf x'=\mathbf z'}p_\theta(x')}{\beta_{t,m}\cdot 1}\\&=\frac{\beta_{t,m}'}{\beta_{t,m}}\Big[\log\Big(\sum_{x':\Gamma \mathbf x'=\Gamma \mathbf x}p_\theta(x')\Big)-1\Big]
\end{align}
where we utilize the facts that $-(\alpha_t'+\beta_{t,c}')=\beta_{t,m}'$, and $\sum_{z'\in c}\sum_{x':\Gamma \mathbf x'=\mathbf z'}p_\theta(x')=1$.

(ii) $z_t=c(x)$, the first term is nonzero only when $z_s=x$, and the second term is nonzero if $z'\in \{\Gamma z'=c(x), z'\in V\}$ or $z'=c(x)$, therefore
\begin{align}
    *&=-\frac{\alpha_t'}{\alpha_t}\frac{\alpha_t}{\beta_{t,c}}\log \Big(\frac{\alpha_t p_\theta(x)}{\beta_{t,c}\displaystyle\sum_{x':\Gamma \mathbf x'=\Gamma \mathbf x}p_\theta(x')}\frac{\beta_{t,c}}{\alpha_t}\Big)-\Big[-\frac{\alpha_t'}{\alpha_t}\frac{\alpha_t \displaystyle\sum_{x':\Gamma \mathbf x'=\Gamma \mathbf x}p_\theta(x')}{\beta_{t,c}\displaystyle\sum_{x':\Gamma \mathbf x'=\Gamma \mathbf x}p_\theta(x')}+\frac{\alpha_t'+\beta_{t,c}'}{\beta_{t,c}}\cdot 1\Big]\\
    &=\frac{-\alpha_t'}{\beta_{t,c}}\log \Big(\frac{p_\theta(x)}{\displaystyle\sum_{x':\Gamma \mathbf x'=\Gamma \mathbf x}p_\theta(x')}\Big)-\frac{\beta_{t,c}'}{\beta_{t,c}}
\end{align}
(iii) $z_t=x$, the first term is always zero since $Q_{t}(z_s,z_t)=0$ for all $z_s\neq z_t$, and the second term is nonzero only when $z'=z_t=x$, hence
\begin{align}
    *=0-\frac{\alpha_t'}{\alpha_t}\cdot 1=-\frac{\alpha_t'}{\alpha_t}
\end{align}
Combining all three categories, we have
\begin{align}
    *=\delta_{z_t, m} \frac{\beta_{t,m}'}{\beta_{t,m}}\Big[\log\Big(\sum_{x':\Gamma \mathbf x'=\Gamma \mathbf x}p_\theta(x')\Big)-1\Big]+\delta_{z_t,c}\Big[\frac{-\alpha_t'}{\beta_{t,c}}\log \Big(\frac{p_\theta(x)}{\displaystyle\sum_{x':\Gamma \mathbf x'=\Gamma \mathbf x}p_\theta(x')}\Big)-\frac{\beta_{t,c}'}{\beta_{t,c}}\Big]+\delta_{z_t, x}(-\frac{\alpha_t'}{\alpha_t})
\end{align}
where we simplify $\delta_{z_t, c(x)}$ to $\delta_{z_t,c}$ since the only cluster token that $z_t$ could be is $c(x)$. Taking the expectation yields,
\begin{align}
    \mathbb E_{t,z_t}[*]=&\mathbb E_{t,z_t}\bigg[\delta_{z_t, m} \frac{\beta_{t,m}'}{\beta_{t,m}}\log\Big(\sum_{x':\Gamma \mathbf x'=\Gamma \mathbf x}p_\theta(x')\Big)+\delta_{z_t,c(x)}\frac{-\alpha_t'}{\beta_{t,c}}\log \Big(\frac{p_\theta(x)}{\displaystyle\sum_{x':\Gamma \mathbf x'=\Gamma \mathbf x}p_\theta(x')}\Big)\bigg]\\&-\mathbb E_{t,z_t}\Big[\delta_{z_t, x}\frac{\alpha_t'}{\alpha_t}+\delta_{z_t,c} \frac{\beta_{t,c}'}{\beta_{t,c}}+\delta_{z_t,m}\frac{\beta_{t,m}'}{\beta_{t,m}}\Big]
\end{align}
The second term has an expectation of zero:
\begin{align}
    &\mathbb E_t \Big[\mathbb E_{z_t\sim q_t(z_t|x)}\Big(\delta_{z_t, x}\frac{\alpha_t'}{\alpha_t}+\delta_{z_t,c} \frac{\beta_{t,c}'}{\beta_{t,c}}+\delta_{z_t,m}\frac{\beta_{t,m}'}{\beta_{t,m}}\Big)\Big]\\=&\mathbb E_t \Big[\alpha_t\frac{\alpha_t'}{\alpha_t}+\beta_{t,c} \frac{\beta_{t,c}'}{\beta_{t,c}}+\beta_{t,m}\frac{\beta_{t,m}'}{\beta_{t,m}}\Big]\\=&\int_{0}^1 (\alpha_t'+\beta_{t,c}'+\beta_{t,m}')\text{d}t\\=&\alpha_t\Big|_0^1+\beta_{t,c}\Big|_0^1+\beta_{t,m}\Big|_0^1\\=&-1+0+1\\=&0
\end{align}
The first term can be simplified into matrix form which is equivalent to a weighted sum of two cross-entropy losses:
\begin{align}
    &\mathbb E_{t,z_t}\bigg[\delta_{z_t, m} \frac{\beta_{t,m}'}{\beta_{t,m}}\log\Big(\sum_{x':\Gamma \mathbf x'=\Gamma \mathbf x}p_\theta(x')\Big)+\delta_{z_t,c}\frac{-\alpha_t'}{\beta_{t,c}}\log \Big(\frac{p_\theta(x)}{\displaystyle\sum_{x':\Gamma \mathbf x'=\Gamma \mathbf x}p_\theta(x')}\Big)\bigg]\\=&\mathbb E_{t, z_t}\bigg[\delta_{z_t, c}\frac{-\alpha_t'}{\beta_{t,c}}\mathbf x^\top \log \frac{\mathbf x_\theta \odot (\Gamma^\top \Gamma \mathbf x)}{\mathbf x_\theta^\top \Gamma^\top \Gamma \mathbf x}+\delta_{z_t, m}\frac{\beta_{t,m}'}{\beta_{t,m}}(\Gamma \mathbf x)^\top \log( \Gamma \mathbf x_\theta)\bigg]\\=&-\mathbb E_{t,z_t} \Big[\delta_{z_t,c} w_{t,c} {\rm CE}(\mathbf x, \frac{\mathbf x_\theta \odot (\Gamma^\top \Gamma \mathbf x)}{\mathbf x_\theta^\top \Gamma^\top \Gamma \mathbf x})+\delta_{z_t,m}w_{t,m}{\rm CE}(\Gamma\mathbf x, \Gamma \mathbf x_\theta)\Big]
\end{align}
where $p_\theta(x_i)=\mathbf x_\theta [i]$ is the predicted probability of token $x_i$, $\odot$ is the Hadamard (element-wise) product, and
\begin{equation}
    w_{t,c}=\frac{-\alpha_t'}{\beta_{t,c}}, ~w_{t,m}=\frac{\beta_{t,m}'}{\beta_{t,m}}
\end{equation}
We therefore conclude the CT-ELBO for HDLM:
\begin{equation}
    \log p(x)\geq -\mathbb E_{t,z_t} \Big[\delta_{z_t,c} w_{t,c} {\rm CE}(\mathbf x, \frac{\mathbf x_\theta \odot (\Gamma^\top \Gamma \mathbf x)}{\mathbf x_\theta^\top \Gamma^\top \Gamma \mathbf x})+\delta_{z_t,m}w_{t,m}{\rm CE}(\Gamma\mathbf x, \Gamma \mathbf x_\theta)\Big]
\end{equation}
\end{proof}

The conclusion that MDLM is a special case of HDLM directly follows.
\begin{corollary}
    MDLM is a special case where there is only one cluster (which equals to the mask).
\end{corollary}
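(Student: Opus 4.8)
The plan is to take the closed-form CT-ELBO of HDLM from \Cref{theorem_elbo} and specialize it to a single cluster, then check that the forward process reduces correspondingly. Set $|C|=1$, so $\Gamma=\mathbf 1^{1\times|V|}$ maps every word token to the unique cluster token, and identify that cluster token with the mask $\mathbf m$. The first step is purely algebraic: for any one-hot $\mathbf x$ we have $\Gamma\mathbf x=1$ (a scalar) and $\Gamma^\top\Gamma\mathbf x=\mathbf 1^{|V|}$. Substituting these into \Cref{eq:elbo_new}, the restricted prediction $\frac{\mathbf x_\theta\odot(\Gamma^\top\Gamma\mathbf x)}{\mathbf x_\theta^\top\Gamma^\top\Gamma\mathbf x}$ collapses to $\mathbf x_\theta$ itself (it is already a normalized distribution), so the cluster-level term becomes the ordinary token-level cross entropy $\mathrm{CE}(\mathbf x,\mathbf x_\theta)$; and the mask-level term satisfies $(\Gamma\mathbf x)^\top\log(\Gamma\mathbf x_\theta)=\log(\mathbf 1^\top\mathbf x_\theta)=\log 1=0$, so it vanishes identically.

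Next I would reconcile the noise schedule. With the intermediate hierarchy equal to the mask there is no genuine distinction between a ``cluster phase'' and a ``mask phase,'' so I take the admissible schedule $\beta_{t,m}\equiv 0$, $\beta_{t,c}=\beta_t=1-\alpha_t$; one checks this respects $\alpha_t+\beta_{t,c}+\beta_{t,m}=1$ and the boundary conditions $\alpha_0=1,\alpha_1=0$. The forward marginal then reads $q_t(z_t|x)=\mathrm{Cat}(z_t;\alpha_t\mathbf x+(\beta_{t,c}+\beta_{t,m})\mathbf m)=\mathrm{Cat}(z_t;\alpha_t\mathbf x+(1-\alpha_t)\mathbf m)$, which is exactly the MDLM marginal of \citep{shi2024simplified,simpleMDLM}; moreover the generator and cumulative transition matrix of \Cref{prop:hdlm_conditional} collapse (the middle and bottom blocks merge once $\Gamma=\mathbf 1$, $\Xi=1$) to the standard two-state absorbing-mask kernel. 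For the objective, $w_{t,c}=-\alpha_t'/\beta_{t,c}=-\alpha_t'/(1-\alpha_t)$ is precisely the MDLM loss weight, and since $\mathbb E_{z_t\sim q_t}[\delta_{z_t,c}]=\beta_{t,c}=1-\alpha_t$ while the $\delta_{z_t,m}$ term contributes nothing (zero coefficient and zero probability), \Cref{eq:elbo_new} becomes $\mathbb E_{t,z_t}\big[\mathbf 1[z_t=\mathbf m]\,\frac{-\alpha_t'}{1-\alpha_t}\,\mathbf x^\top\log\mathbf x_\theta\big]$, i.e., the MDLM continuous-time ELBO.

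The step I expect to require the most care is this schedule collapse together with the bookkeeping of the indicators: once $\mathbf c=\mathbf m$, the events $\{z_t=c\}$ and $\{z_t=m\}$ coincide, so one must avoid double-counting $\delta_{z_t,c}+\delta_{z_t,m}$ and must handle the weight $\beta_{t,m}'/\beta_{t,m}$, which is a $0/0$ limit under a naive substitution. Choosing $\beta_{t,m}\equiv 0$ from the outset sidesteps this, since the mask-phase probability is identically zero and that term is never evaluated, so the reduction is unambiguous. A slightly more robust alternative is to recompute the generator $Q_t$ directly for the two-level chain $\mathbf x\to\mathbf m$, verify it equals the MDLM generator, and then invoke \Cref{lemma_elbo_rate} again; this avoids taking any limit inside \Cref{eq:elbo_new} at the cost of repeating part of the computation used to prove \Cref{theorem_elbo}.
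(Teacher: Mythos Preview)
Your proposal is correct and follows the same approach as the paper: set $\Gamma=\mathbf 1^{1\times|V|}$, note that $\mathrm{CE}(\Gamma\mathbf x,\Gamma\mathbf x_\theta)=0$ so the mask-level term vanishes, and that the cluster-level term collapses to the ordinary $\mathrm{CE}(\mathbf x,\mathbf x_\theta)$ with expected weight $\mathbb E_{z_t}[\delta_{z_t,c}w_{t,c}]=-\alpha_t'$, which is the MDLM objective. Your extra care about the schedule collapse ($\beta_{t,m}\equiv 0$) and indicator bookkeeping goes beyond the paper's proof, which simply sidesteps the $w_{t,m}=0/0$ concern because the multiplying cross-entropy is identically zero regardless of the weight.
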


\begin{proof}
    We need to show that when the neural network parameters are identical, our MDLM ELBO reduces to MDLM.
    According to the HDLM ELBO in \Cref{theorem_elbo}, 
    \begin{align}
    -\log p(x) \leq -\mathbb E_{t,z_t} \Big[\delta_{z_t,c} w_{t,c} {\rm CE}(\mathbf x, \frac{\mathbf x_\theta \odot (\Gamma^\top \Gamma \mathbf x)}{\mathbf x_\theta^\top \Gamma^\top \Gamma \mathbf x})+\delta_{z_t,m}w_{t,m}{\rm CE}(\Gamma\mathbf x, \Gamma \mathbf x_\theta)\Big]
    \end{align}
    Here all word tokens belong to the same cluster, i.e.,  $\Gamma=\mathbf 1^{1\times |V|}$, thus it always holds that the second term is zero:
    \begin{equation}
        \text{CE}(\Gamma\mathbf x, \Gamma\mathbf x_\theta(z_t, t))=0
    \end{equation}
    For the first term in~\Cref{eq:elbo_new}, we have the following derivation:
    \begin{equation}
        \mathbb{E}_{t,z_t}\Big[\delta_{z_t,c} w_{t,c} {\rm CE}(\mathbf x, \frac{\mathbf x_\theta \odot (\Gamma^\top \Gamma \mathbf x)}{\mathbf x_\theta^\top \Gamma^\top \Gamma \mathbf x})\Big]=\mathbb E_{t,z_t}\Big[\delta_{z_{t}, c}w_{t,c}\rm{CE}(\mathbf x, \mathbf x_\theta)\Big]=\mathbb E_t\Big[-\alpha_t' \rm{CE(\mathbf x,\mathbf x_\theta)}\Big]
    \end{equation} 
    which is the same as the MDLM~\citep{simpleMDLM}. 
\end{proof}

\paragraph{Analysis of the training ELBO for the example forward processes.}

We now give more analysis on the example forward processes stated in the main text - note that these are only the schedules used in our experiments, while HDLM supports more flexible schedules. Consider the following transition process: 
when a token becomes a cluster token at time $\tau$, it will become a mask token at time $t_{m,\tau}\sim \mathcal U(\tau, 1)$. Then in the forward process, the rate of cluster token and mask token will be
\begin{equation}
\beta_{t,c}=\int_0^t \frac{1-t}{1-\tau}(-{\rm d}\alpha_\tau)
\end{equation}
\begin{equation}
\beta_{t,m}=\int_0^t \frac{t-\tau}{1-\tau}(-{\rm d}\alpha_\tau)
\end{equation}

With $\alpha_t=(1-t)^\gamma$, we have the following general expression:
\begin{align}\label{forward_p_cluster}
    \mathbb \beta_{t,c}=\left\{\begin{aligned}
        \frac{\gamma}{\gamma-1} (1-t-(1-t)^{\gamma}), \gamma\neq 1\\ -(1-t)\ln(1-t), \gamma=1\end{aligned}\right.
\end{align}
\begin{align}\label{forward_p_mask}
\beta_{t,m}=\left\{ \begin{aligned}\frac{1}{\gamma-1}(\gamma t+(1-t)^\gamma-1), \gamma\neq 1\\ t+(1-t)\ln(1-t), \gamma=1 \end{aligned}\right.
\end{align}

Therefore, the weights $w_{t,c}, w_{t,m}$ can be computed as 
\begin{align}
    w_{t,c}=\frac{-\alpha_t'}{\beta_{t,c}}=\left\{\begin{aligned}(\gamma-1)\frac{(1-t)^{\gamma-2}}{1-(1-t)^{\gamma-1}}, \gamma\neq 1\\ \frac{1}{-(1-t)\ln(1-t)}, \gamma=1\end{aligned}\right.\\
    w_{t,m}=\frac{\beta_{t,m}'}{\beta_{t,m}}=\left\{\begin{aligned} \frac{\gamma(1-(1-t)^{\gamma-1})}{\gamma t+(1-t)^\gamma}, \gamma\neq 1\\ \frac{-\ln(1-t)}{t+(1-t)\ln(1-t)}, \gamma=1\end{aligned}\right.
\end{align}

When the schedules follow the common practice that $\alpha_t$ is always decreasing w.r.t. $t$ and $\beta_{t,m}$ is increasing, we have $\alpha_t'\leq 0, \beta_{t,m}'\geq 0$, leading to non-negative weights:
\begin{equation}
    w_{t,c}\geq0, ~ w_{t,m}\geq 0, ~ \forall t\in[0,1]
\end{equation}
While \Cref{prop:invariance_new_weights} guarantees that the expectations for token- and cluster-level loss weights are finite, their limits at $t\rightarrow 0_+$ and $t\rightarrow1_-$ may not be finite. In particular, 
\begin{align}
    \lim_{t\rightarrow 0_+} w_{t,c}\rightarrow + \infty, ~ &\text{if} \lim_{t\rightarrow 0_+} \alpha_t' \neq 0\\
    \lim_{t\rightarrow 1_-} w_{t,c}\rightarrow + \infty, ~ &\text{if} \lim_{t\rightarrow 1_-} \alpha_t' \neq 0\\
    \lim_{t\rightarrow 0_+} w_{t,m}\rightarrow + \infty, ~ &\text{if} \lim_{t\rightarrow 0_+} \beta_{t,m}' \neq 0
\end{align}
In typical schedules, these conditions tend to be satisfied - for instance, in the linear schedule $\alpha_t=1-t$ and $\alpha_t=-1$, resulting in infinite weights. However, it is reasonable for $w_{0,c}$ and $w_{0,m}$ to be infinite, since all the cluster and mask tokens must be fully decoded to clean word tokens when $t=0$. Moreover, the dominators of the weights $\beta_{t,c},\beta_{t,m}$ are exactly the occurrence probabilities of cluster/mask tokens in the forward process, thus we have
\begin{align}
    \mathbb E_{z_t\sim q_t(z_t|x)}[\delta_{z_t,c}w_{t,c}]=-\alpha_t'\\
    \mathbb E_{z_t\sim q_t(z_t|x)}[\delta_{z_t,m}w_{t,m}]=\beta_{t,m}'
\end{align}
which are finite as long as the time derivatives of $\alpha_t$ and $\beta_{t,m}$ are finite as in common practice. The above analysis suggests a high degree of freedom in designing forward processes, as we obtain proper training ELBOs in most cases under mild conditions.

\subsection{Extended Forward Processes}\label{appendix_subsec:extended_process}

\paragraph{HDLM with stochastic perturbations.}

We now derive the ELBO for the case with random perturbations to the cluster tokens (i.e., $\xi<1$).  
\begin{theorem}[Closed-form CT-ELBO for HDLM with stochastic perturbations]\label{theorem:elbo_perturbation}
    \begin{align}\label{eq:elbo_perturbation}
    \log p(x)\geq \mathbb E_{t, z_t}\Bigg[&\delta_{z_t, m}\frac{\beta_{t,m}'}{\beta_{t,m}}\bigg(\xi\cdot\log\Big(p_\theta(c(x))+\frac{1-\xi}{\xi(N_c-1)}[1-p_\theta(c(x))]\Big)\notag\\&+\frac{1-\xi}{N_c-1}\cdot \sum_{z_s=c\setminus c(x)}\log\Big(\frac{\xi(N_c-1)}{1-\xi}p_\theta(z_s)+[1-p_\theta(z_s)]\Big)\bigg)\notag\\&+\delta_{z_t, c(x)}\frac{-\alpha_t'}{\beta_{t,c}}\log\frac{p_\theta(x)}{p_\theta(z_t)+\frac{1-\xi}{\xi(N_c-1)}[1-p_\theta(z_t)]}\notag\\&+\delta_{z_t, c\setminus c(x)}\frac{-\alpha_t'}{\beta_{t,c}}\log\frac{p_\theta(x)}{\frac{\xi(N_c-1)}{1-\xi}p_\theta(z_t)+[1-p_\theta(z_t)]}\Bigg]
    \end{align}
\end{theorem}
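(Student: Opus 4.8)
The plan is to replay the proof of \Cref{theorem_elbo} almost verbatim, but with a forward generator that accounts for the cluster-corruption mechanism and with one extra case in the case analysis (an "incorrect" cluster token). Concretely, I would invoke the rate-form CT-ELBO of \Cref{lemma_elbo_rate}, plug in the generator $Q_t$ of the perturbed process, evaluate the summand (*) separately for $z_t\in\{x,\ c(x),\ c'\neq c(x),\ m\}$, and then collect terms.

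The first step is to pin down $Q_t$ for the perturbed forward process (this is recorded in \Cref{appendix_subsec:extended_process}). The only modification relative to \Cref{prop:hdlm_conditional} is in the word$\to$cluster block: since the stated marginals force $\mathbb P_t(c=c(x))=\xi\beta_{t,c}$ and $\mathbb P_t(c=c')=\tfrac{1-\xi}{N_c-1}\beta_{t,c}$ at all times while the cluster$\to$mask dynamics are cluster-independent, a Kolmogorov-forward computation (matching coefficients in $\tfrac{\mathrm d\mathbf z_t}{\mathrm dt}=Q_t^\top\mathbf z_t$) gives rate $-\xi\alpha_t'/\alpha_t$ out of word token $x_i$ into its matching cluster $c(x_i)$, rate $-\tfrac{1-\xi}{N_c-1}\alpha_t'/\alpha_t$ into each of the other $N_c-1$ clusters, word-diagonal $\alpha_t'/\alpha_t$, cluster$\to$mask rate $-\tfrac{\alpha_t'+\beta_{t,c}'}{\beta_{t,c}}=\tfrac{\beta_{t,m}'}{\beta_{t,c}}$, cluster-diagonal $\tfrac{\alpha_t'+\beta_{t,c}'}{\beta_{t,c}}$, and mask absorbing. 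I would also record the model-induced marginals: $q_t(x|\mathbf x_\theta)=\alpha_t p_\theta(x)$, $q_t(m|\mathbf x_\theta)=\beta_{t,m}$, $\sum_{z'\in c}q_t(z'|\mathbf x_\theta)=\beta_{t,c}$, and --- the genuinely new ingredient --- for any cluster $c'$, because every word token now reaches $c'$, $q_t(c'|\mathbf x_\theta)=\xi\beta_{t,c}\,p_\theta(c')+\tfrac{1-\xi}{N_c-1}\beta_{t,c}\bigl(1-p_\theta(c')\bigr)$, where $p_\theta(c):=\sum_{x':\Gamma\mathbf x'=\mathbf c}p_\theta(x')$.

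Then I evaluate (*). For $z_t=x$ the first sum is empty and the second gives $-\alpha_t'/\alpha_t$. For $z_t=c(x)$ the only predecessor with nonzero data-marginal is $z_s=x$ (rate $-\xi\alpha_t'/\alpha_t$); the $\xi$ and $\alpha_t$ cancel against $q_t(c(x)|x)=\xi\beta_{t,c}$, and the backward-normalization sum over all word predecessors telescopes, $\sum_{x'}Q_t(x',c(x))q_t(x'|\mathbf x_\theta)=-\alpha_t'\,q_t(c(x)|\mathbf x_\theta)/\beta_{t,c}$, so that (*) becomes $-\tfrac{\alpha_t'}{\beta_{t,c}}\log\tfrac{p_\theta(x)}{p_\theta(c(x))+\frac{1-\xi}{\xi(N_c-1)}(1-p_\theta(c(x)))}-\tfrac{\beta_{t,c}'}{\beta_{t,c}}$. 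For $z_t=c'\neq c(x)$ the computation is identical with $\xi$ replaced by $\tfrac{1-\xi}{N_c-1}$ in the predecessor rate and in $q_t(c'|x)$, yielding $-\tfrac{\alpha_t'}{\beta_{t,c}}\log\tfrac{p_\theta(x)}{\frac{\xi(N_c-1)}{1-\xi}p_\theta(c')+(1-p_\theta(c'))}-\tfrac{\beta_{t,c}'}{\beta_{t,c}}$. For $z_t=m$ all $N_c$ clusters are predecessors; the first sum has one term with weight $\xi\beta_{t,c}/\beta_{t,m}$ and $N_c-1$ terms with weight $\tfrac{1-\xi}{N_c-1}\beta_{t,c}/\beta_{t,m}$, each log collapsing to the same two expressions above, and the second sum equals $\beta_{t,m}'/\beta_{t,m}$, so (*) is $\tfrac{\beta_{t,m}'}{\beta_{t,m}}$ times the bracket in \Cref{eq:elbo_perturbation} minus $\tfrac{\beta_{t,m}'}{\beta_{t,m}}$ (and $\xi+(N_c-1)\tfrac{1-\xi}{N_c-1}=1$ lets that last constant be absorbed into the residual). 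Summing the four cases, the residual self-transition terms aggregate to $-\mathbb E_{t,z_t}[\delta_{z_t,x}\tfrac{\alpha_t'}{\alpha_t}+\delta_{z_t,c}\tfrac{\beta_{t,c}'}{\beta_{t,c}}+\delta_{z_t,m}\tfrac{\beta_{t,m}'}{\beta_{t,m}}]$, which vanishes by the identity $\int_0^1(\alpha_t'+\beta_{t,c}'+\beta_{t,m}')\,\mathrm dt=\alpha_t+\beta_{t,c}+\beta_{t,m}\big|_0^1=0$ used in \Cref{theorem_elbo}, and what remains is exactly \Cref{eq:elbo_perturbation}. The main obstacle is purely bookkeeping: one must consistently carry the two distinct word$\to$cluster rates $-\xi\alpha_t'/\alpha_t$ and $-\tfrac{1-\xi}{N_c-1}\alpha_t'/\alpha_t$, recognize that the backward-rate normalization over all word tokens (weighted by $\mathbf x_\theta$) collapses to $-\alpha_t'\,q_t(z_t|\mathbf x_\theta)/\beta_{t,c}$ whether or not $z_t$ is the correct cluster, and keep the $\xi$-dependent constants inside the logarithms straight --- this last simplification is the step most prone to algebra slips.
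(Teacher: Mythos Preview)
Your proposal is correct and follows essentially the same route as the paper: modify the word$\to$cluster block of $Q_t$ via the perturbed map $\tilde\Gamma_\xi$, re-evaluate the summand $(*)$ in \Cref{lemma_elbo_rate} for the four cases $z_t\in\{x,\,c(x),\,c'\neq c(x),\,m\}$, and observe that the residual constants cancel via the same $\int_0^1(\alpha_t'+\beta_{t,c}'+\beta_{t,m}')\,\mathrm dt=0$ identity. The paper additionally introduces the shorthand $\eta=\tfrac{(N_c-1)\xi}{1-\xi}$ to condense the log arguments, but otherwise the structure and all intermediate computations match yours.
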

Here we use the notation $p_\theta(z_s)=\sum_{x_i:c(x_i)=z_s}p_\theta(x_i), z_s\in C$ to simplify the expression. 
To interpret the ELBO, denote $\eta=\frac{(N_c-1)\xi}{1-\xi}$, which is usually greater than $1$ for reasonable $\xi$ values (since $\xi$ should not be much smaller than $1$). In particular, when $\xi=1$, $\eta\rightarrow\infty$. Then \Cref{eq:elbo_perturbation} simplifies to:
\begin{align}
    \log p(x)\geq \mathbb E_{t, z_t}\Bigg[&\delta_{z_t, m}\frac{\beta_{t,m}'}{\beta_{t,m}}\xi\bigg(\log\Big((1-\frac{1}{\eta})p_\theta(c(x))+\frac{1}{\eta}\Big)+\frac{1}{\eta} \sum_{z_s=c\setminus c(x)}\log\Big((\eta-1)p_\theta(z_s)+1\Big)\bigg)\notag\\&+\delta_{z_t, c(x)}\frac{-\alpha_t'}{\beta_{t,c}}\log\frac{p_\theta(x)}{(1-\frac{1}{\eta})p_\theta(z_t)+\frac{1}{\eta}}+\delta_{z_t, c\setminus c(x)}\frac{-\alpha_t'}{\beta_{t,c}}\log\frac{p_\theta(x)}{(\eta-1)p_\theta(z_t)+1}\Bigg]
\end{align}

\begin{proof}
The inhomogeneous generator $Q_{t,\xi}$ can be obtained by replacing $\Gamma$ in \Cref{eq:generator} by $\tilde{\Gamma}_\xi$, which is defined by substituting every $1$ in $\Gamma$ with $\xi$ and every $0$ with $\frac{1-\xi}{N_c-1}$. We continue to calculate Equation (*).

(i) $z_t=x$, the calculation is the same as those in \Cref{theorem_elbo},
\begin{equation}
    *=-\frac{\alpha_t'}{\alpha_t}
\end{equation}

(ii) $z_t=c(x)$, which means the word token transits to a correct cluster token. Denote  $p_\theta(c(x))=\sum_{x':\Gamma \mathbf x'=\mathbf c(x)}p_\theta(x')$, then the second term in Equation (*) needs to sum over all $z'\in V$ and $z'=z_t=c(x)$:
\begin{equation}
    -\sum_{z'}Q_t(z',z_t)\frac{q_t(z'|\mathbf x_\theta)}{q_t(z_t|\mathbf x_\theta)}=-\frac{-\alpha_t'}{\alpha_t}\frac{\alpha_t\xi p_\theta(c(x))+\alpha_t\frac{1-\xi}{N_c-1}(1-p_\theta(c(x)))}{\beta_{t,c}\xi p_\theta(c(x))+\beta_{t,c}\frac{1-\xi}{N_c-1}(1-p_\theta(c(x)))}-\frac{\alpha_t'+\beta_{t,c}'}{\beta_{t,c}}\cdot 1=-\frac{\beta_{t,c}'}{\beta_{t,c}}
\end{equation}
The first term is zero when $z_s$ is any of cluster and mask tokens (since $Q_{t,\xi}(z_s,c(x))=0$ when $z_s\neq c(x)$, and zero for any word token that is not $x$ since $q_t(z_s|x)=0$ when $z_s\in \{V\backslash x\}$. Thus the first term is
\begin{align}
    &\sum_{z_s\neq z_t}Q_t(z_s,z_t)\frac{q_t(z_s|x)}{q_t(z_t|x)}\log\frac{q_t(z_s|\mathbf x_\theta)q_t(z_t|x)}{q_t(z_t|\mathbf x_\theta)q_t(z_s|x)}\\=&\frac{-\alpha_t'\xi}{\alpha_t}\frac{\alpha_t}{\beta_{t,c}\xi}\log\Big(\frac{p_\theta(x)\alpha_t}{\xi p_\theta(c(x))+\frac{1-\xi}{N_c-1}(1-p_\theta(c(x)))}\frac{\xi\beta_{t,c}}{\alpha_t}\Big)\\=&\frac{-\alpha_t'}{\beta_{t,c}}\log \Big(\frac{p_\theta(x)}{p_\theta(z_t)+\frac{1}{\eta}(1-p_\theta(z_t))}\Big)
\end{align}

(iii) $z_t\in\{C\backslash c(x)\}$, which could only happen when $\xi<1$. The second term in Equation (*) can be computed as:
\begin{equation}
    -\sum_{z'}Q_t(z',z_t)\frac{q_t(z'|\mathbf x_\theta)}{q_t(z_t|\mathbf x_\theta)}=-\frac{-\alpha_t'}{\alpha_t}\frac{\alpha_t\xi p_\theta(z_t)+\alpha_t\frac{1-\xi}{N_c-1}(1-p_\theta(z_t))}{\beta_{t,c}\xi p_\theta(z_t)+\beta_{t,c}\frac{1-\xi}{N_c-1}(1-p_\theta(z_t))}-\frac{\alpha_t'+\beta_{t,c}'}{\beta_{t,c}}\cdot 1=-\frac{\beta_{t,c}'}{\beta_{t,c}}
\end{equation}
Analogously, the first term is only nonzero when $z_s=x$,
\begin{align}
    &\sum_{z_s\neq z_t}Q_t(z_s,z_t)\frac{q_t(z_s|x)}{q_t(z_t|x)}\log\frac{q_t(z_s|\mathbf x_\theta)q_t(z_t|x)}{q_t(z_t|\mathbf x_\theta)q_t(z_s|x)}\\=&\frac{-\alpha_t'}{\alpha_t}\frac{(1-\xi)}{N_c-1}\frac{\alpha_t}{\beta_{t,c}\frac{(1-\xi)}{N_c-1}}\log\Big(\frac{p_\theta(x)\alpha_t}{\xi p_\theta(z_t)+\frac{1-\xi}{N_c-1}(1-p_\theta(z_t))}\frac{\frac{(1-\xi)}{N_c-1}\beta_{t,c}}{\alpha_t}\Big)\\=&\frac{-\alpha_t'}{\beta_{t,c}}\log \Big(\frac{p_\theta(x)}{\eta p_\theta(z_t)+(1-p_\theta(z_t))}\Big)
\end{align}

(iv) $z_t=m$, the second term sums over all cluster tokens,
\begin{equation}
    -\sum_{z'}Q_t(z',z_t)\frac{q_t(z'|\mathbf x_\theta)}{q_t(z_t|\mathbf x_\theta)}=\frac{\alpha_t'+\beta_{t,c}'}{\beta_{t,c}}\frac{\beta_{t,c}\sum_{z'\in C}\xi p_\theta(z')+\frac{1-\xi}{N_c-1}(1-p_\theta(z'))}{\beta_{t,m}}=-\frac{\beta_{t,m}'}{\beta_{t,m}}
\end{equation}
where we leverage $\sum_{z'\in C}p_\theta(z')=1$. For the first term, we need to sum over all cluster tokens whether $z_s=c(x)$ or not, 
\begin{align}
    &\sum_{z_s\neq z_t}Q_t(z_s,z_t)\frac{q_t(z_s|x)}{q_t(z_t|x)}\log\frac{q_t(z_s|\mathbf x_\theta)q_t(z_t|x)}{q_t(z_t|\mathbf x_\theta)q_t(z_s|x)}\\=&\frac{-(\alpha_t'+\beta_{t,c}')}{\beta_{t,c}}\bigg[\frac{\beta_{t,c}\xi}{\beta_{t,m}}\log \Big(\frac{\big(p_\theta(c(x))\xi+[1-p_\theta(c(x))]\frac{1-\xi}{N_c-1}\big)\beta_{t,c}}{\beta_{t,m}\cdot 1}\frac{\beta_{t,m}}{\beta_{t,c} \xi}\Big)\\&\quad \quad \quad \quad \quad \quad + \frac{\beta_{t,c}\frac{1-\xi}{N_c-1}}{\beta_{t,m}}\sum_{z_s=c\backslash c(x)}\log\Big(\frac{\big(p_\theta(z_s)\xi+(1-p_\theta(z_s))\frac{1-\xi}{N_c-1}\big)\beta_{t,c}}{\beta_{t,m}\cdot 1}\frac{\beta_{t,m}}{\beta_{t,c}\frac{1-\xi}{N_c-1}}\Big)\bigg]\notag \\=&\frac{\beta_{t,m}'}{\beta_{t,m}}\xi \bigg[\log \Big(p_\theta(c(x))+\frac{1}{\eta}[1-p_\theta(c(x))]\Big)+\frac{1}{\eta}\sum_{z_s=c\setminus c(x)} \log\Big(\eta p_\theta(z_s)+(1-p_\theta(z_s))\Big)\bigg]
\end{align}

Combining all types, we have
\begin{align}
    \mathbb E_{t,z_t}[*]=
    \mathbb E_{t, z_t}\Bigg[&\delta_{z_t, m}\frac{\beta_{t,m}'}{\beta_{t,m}}\xi\bigg(\log\Big((1-\frac{1}{\eta})p_\theta(c(x))+\frac{1}{\eta}\Big)+\frac{1}{\eta} \sum_{z_s=c\setminus c(x)}\log\Big((\eta-1)p_\theta(z_s)+1\Big)\bigg)\notag \\&+\delta_{z_t, c(x)}\frac{-\alpha_t'}{\beta_{t,c}}\log\frac{p_\theta(x)}{(1-\frac{1}{\eta})p_\theta(z_t)+\frac{1}{\eta}}+\delta_{z_t, c\setminus c(x)}\frac{-\alpha_t'}{\beta_{t,c}}\log\frac{p_\theta(x)}{(\eta-1)p_\theta(z_t)+1}\Bigg]\notag \\+\mathbb E_{t,z_t}&\Big[\delta_{z_t,m}\frac{\beta_{t,m}'}{\beta_{t,m}}+\delta_{z_t, c(x)}\frac{\beta_{t,c}'}{\beta_{t,c}}+\delta_{z_t, c\setminus c(x)}\frac{\beta_{t,c}'}{\beta_{t,c}}+\delta_{z_t, x}\frac{\alpha_t'}{\alpha_t}\Big]
\end{align}
For similar reason, one could easily obtain that the expectation of the bias term is zero, hence we conclude the proof.
\end{proof}

One can verify that when $\xi=1$, the ELBO reduces to the original version without perturbations in \Cref{theorem_elbo}.
The ELBO actually encourages correct mask-to-cluster prediction (through the first term) and correct cluster-to-token prediction (through the second term), as well as out-of-cluster exploration when the current cluster state $z_t$ is inaccurate (through the dominator of the last term).

\paragraph{HDLM with arbitrary hierarchy levels.}

For consistency and simplicity, suppose the clean tokens are the $0$-th hierarchy, and the mask token is the $n$-th hierarchy. Denote $c_i(x)$ as the mapping of a clean token $x$ to the $i$-th hierarchy. Let $\alpha_{t,i}$ be the marginal schedule for the $i$-th hierarchy. We assume $\xi=1.0$ for simplicity, then the ELBO is given as follows.
\begin{theorem}[Closed-form CT-ELBO for HDLM with arbitrary number of hierarchies]\label{theorem:elbo_any_hierarchy}
    \begin{equation}\label{eq:elbo_any_hierarchy}
    \log p(x)\geq \mathbb E_{t, z_t}\bigg[\sum_{i=1}^n \delta_{z_t, i}\frac{-\sum_{j< i}\alpha_{t,j}'}{\alpha_{t,i}}\log\Big(\frac{\sum_{x':c_{i-1}(x')=c_{i-1}(x)}p_\theta(x')}{\sum_{x'':c_{i}(x'')=c_{i}(x)}p_\theta(x'')}\Big)\bigg]
\end{equation}
where $\delta_{z_t, i}$ is an indicator function that denotes whether $z_t$ is in the $i$-th hierarchy. 
\end{theorem}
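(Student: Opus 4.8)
The plan is to generalize the proof of \Cref{theorem_elbo} to the multi-hierarchy setting, following the same three-step recipe: (1) write down the generator matrix $Q_t$ for the $n$-level chain $\mathbf{x}\to\mathbf{c}_1\to\dots\to\mathbf{c}_{n-1}\to\mathbf{m}$, (2) plug $Q_t$ into the rate-form ELBO of \Cref{lemma_elbo_rate} and evaluate Equation (*) separately for each possible hierarchy level of $z_t$, and (3) collect terms, showing the ``bias'' part has zero expectation. First I would set up the generator. By the same argument as in \Cref{prop:hdlm_conditional}, the chain only jumps between adjacent hierarchies, so $Q_t$ is block upper-bidiagonal: the $(i-1,i)$ block describes the rate at which level-$(i-1)$ tokens collapse onto their level-$i$ ancestors. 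Writing $\bar\alpha_{t,i}=\sum_{j<i}\alpha_{t,j}$ for the cumulative probability of being at hierarchy $\ge i$ (equivalently, $\alpha_{t,i}$ is the marginal mass on hierarchy $i$ with $\sum_i \alpha_{t,i}=1$ after relabeling), the diagonal exit rate from level $i$ should be $-\bar\alpha_{t,i}'/\alpha_{t,i}$, since the marginal flux into hierarchies $\ge i$ divided by the current occupancy gives the instantaneous rate. I would record this generalized $Q_t$ as the analogue of \Cref{eq:generator}.

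Next I would evaluate Equation (*) for $z_t$ at hierarchy $i$. Because of the block-bidiagonal structure, the first sum in (*) is nonzero only for $z_s$ at hierarchy $i-1$ (specifically the unique ancestor chain through $z_t$, so effectively $z_s$ ranges over $\{x':c_{i-1}(x')=c_{i-1}(x)\}$ when $i-1=0$, or the single refinement $c_{i-1}(x)$ otherwise), and the second sum runs over $z'$ at hierarchies $\le i$. Using the marginal $q_t(z_t|x)$ which puts mass $\alpha_{t,i}$ on $c_i(x)$, and the model's implied marginal $q_t(z_t|\mathbf{x}_\theta)$ which puts mass $\alpha_{t,i}\sum_{x':c_i(x')=c_i(x)}p_\theta(x')$ on $c_i(x)$, the schedule factors $\alpha_{t,i}$ cancel inside the logarithm exactly as in the one-hierarchy case, leaving the clean ratio $\log\big(\sum_{x':c_{i-1}(x')=c_{i-1}(x)}p_\theta(x')\big/\sum_{x'':c_i(x'')=c_i(x)}p_\theta(x'')\big)$ weighted by $-\bar\alpha_{t,i}'/\alpha_{t,i}\cdot\alpha_{t,i}/(\text{occupancy at }z_t)$; after multiplying by the $q_t(z_s|x)/q_t(z_t|x)$ prefactor this collapses to weight $-\bar\alpha_{t,i}'/\alpha_{t,i}$ times $\delta_{z_t,i}$. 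The second sum in (*) should contribute a pure bias term $-\bar\alpha_{t,i}'/\alpha_{t,i}\cdot(\text{stuff})$ that telescopes.

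Then I would handle the bias. Summing the per-level bias contributions and taking $\mathbb{E}_{z_t\sim q_t(\cdot|x)}$ replaces each $\delta_{z_t,i}$ by $\alpha_{t,i}$, so the weights $-\bar\alpha_{t,i}'/\alpha_{t,i}$ become simply $-\bar\alpha_{t,i}'$; summing over $i$ and integrating over $t\in[0,1]$ gives a telescoping sum of $\alpha_{t,i}\big|_0^1$ terms which evaluates to zero by the boundary conditions ($\alpha_{0,0}=1$, $\alpha_{1,n}=1$, all others zero at the endpoints) — mirroring the $-1+0+1=0$ computation in the proof of \Cref{theorem_elbo}. This yields exactly \Cref{eq:elbo_any_hierarchy}.

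The main obstacle will be bookkeeping the combinatorics of ``which $z_s$ and $z'$ are reachable'' cleanly for a general hierarchy level — in particular correctly identifying that the numerator of the log-ratio is the level-$(i-1)$ cluster probability $\sum_{x':c_{i-1}(x')=c_{i-1}(x)}p_\theta(x')$ (which for $i=1$ degenerates to the single word probability $p_\theta(x)$, and for $i=n$ the denominator degenerates to $1$, recovering the two terms of \Cref{theorem_elbo}), and verifying that the schedule normalizations cancel consistently at every level rather than only at the mask/cluster levels. Getting the generalized generator $Q_t$ right — especially the diagonal entries $-\bar\alpha_{t,i}'/\alpha_{t,i}$ and checking it satisfies the Kolmogorov equations with the correct multi-level marginals — is the other place where care is needed; everything downstream is a routine repetition of the $n=2$ algebra.
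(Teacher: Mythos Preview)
Your plan mirrors the paper's proof exactly: write the block-bidiagonal generator $Q_{t,n}$, plug into \Cref{lemma_elbo_rate}, evaluate Equation~(*) level by level, and show the bias vanishes. One correction to make before you execute it: your diagonal exit rate at level $i$ is off by one index. Your own flux argument (``marginal flux into hierarchies $\geq i$ divided by occupancy'') actually gives the exit rate \emph{from level $i{-}1$}, not from level $i$; the correct diagonal entry of $Q_t$ on block $i$ is $\sum_{j\le i}\alpha_{t,j}'/\alpha_{t,i}$, which is what the paper records in its $Q_{t,n}$. This index shift matters for the bias step. When you compute the second sum in (*) at $z_t=c_i(x)$, the diagonal piece $\sum_{j\le i}\alpha_{t,j}'/\alpha_{t,i}$ and the contribution from all level-$(i{-}1)$ preimages $-\sum_{j<i}\alpha_{t,j}'/\alpha_{t,i}$ partially cancel, leaving the per-level bias $-\alpha_{t,i}'/\alpha_{t,i}$, not $-\bar\alpha_{t,i}'/\alpha_{t,i}$. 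Taking $\mathbb{E}_{z_t}$ turns this into $-\alpha_{t,i}'$, and summing over $i$ gives $-\big(\sum_i\alpha_{t,i}\big)'=0$ pointwise in $t$; that is the paper's generalization of the $-1+0+1=0$ step. With your stated bias weights $-\bar\alpha_{t,i}'$, the sum over $i$ would integrate to $n$ rather than $0$, so the argument as written would not close. Once you fix this off-by-one, the rest of your outline goes through unchanged and matches the paper.
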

\begin{proof}
We start with calculating the time-inhomogeneous generator for HDLM with $n$ hierarchies. By recursion method, we have
\begin{equation}\label{eq:generator_n}
    Q_{t,n}=\begin{bmatrix}\frac{\alpha_{t,0}'}{\alpha_{t,0}}I_{|V|} & -\frac{\alpha_{t,0}'}{\alpha_{t,0}}\Gamma_0^\top & 0 & \dots & 0 & 0\\ 0 & \frac{\alpha_{t,0}'+\alpha_{t,1}'}{\alpha_{t,1}}I_{|C_1|} & -\frac{\alpha_{t,0}'+\alpha_{t,1}'}{\alpha_{t,1}}\Gamma_1^\top & \dots & 0 & 0\\ \dots & \dots & \dots & \dots & \dots & \dots\\ \dots & \dots & \dots & \dots & \dots & \dots\\ 0 & 0 & 0 & \dots & \frac{\sum_{j=0}^{n-1}\alpha_{t,j}'}{\alpha_{t,n-1}}I_{|C_{n-1}|} & -\frac{\sum_{j=0}^{n-1}\alpha_{t,j}'}{\alpha_{t,n-1}}\Gamma_{n-1} \\ 0 & 0 & 0 & \dots & 0 & 0 \end{bmatrix}
\end{equation}
where $\Gamma_i\in \mathbb R^{|C_{i+1}|\times |C_{i}|}$ maps tokens in the $i$-th hierarchy to the $i+1$-th hierarchy.

Analogously to the proof for \Cref{theorem_elbo}, according to \Cref{lemma_elbo_rate}, plugin the above $Q_{t,n}$ into \Cref{eq:elbo_rate} yields the following calculation:

(i) $i=0$ (namely $z_t=x$), the first term in ($*$) is zero, and the second term is nonzero only when $z'=z_t=x$, thus
\begin{equation}
    *=-\frac{\alpha_{t,0}'}{\alpha_{t,0}}
\end{equation}

(ii) $0<i<n$ (namely $z_t=c_i(x)$ the mapping in the $i$-th intermediate hierarchy), only when $z_s=c_{i-1}(x)$ that the first term is nonzero, and only when $z'=z_t=c_{i}(x)$ or $z'\in\{z'\in C_{i-1}:\Gamma_{i-1}\mathbf z'=\mathbf z_t\}$ the second term is nonzero, hence
\begin{align}
    *&=\frac{-\sum_{j<i}\alpha_{t,j}'}{\alpha_{t,i-1}}\frac{\alpha_{t,i-1}}{\alpha_{t,i}}\log\Big(\frac{\alpha_{t,i-1}\sum_{x':c_{i-1}(x')=c_{i-1}(x)}p_\theta(x')}{\alpha_{t,i}\sum_{x'':c_{i}(x'')=c_{i}(x)}p_\theta(x'')}\frac{\alpha_{t,i}}{\alpha_{t,i-1}}\Big)\notag \\&\quad -\Big(\frac{\sum_{j\leq i}\alpha_{t,j}'}{\alpha_{t,i}}\cdot 1+\frac{-\sum_{j<i}\alpha_{t,j}'}{\alpha_{t,i-1}}\cdot \frac{\alpha_{t,i-1}}{\alpha_{t,i}}\Big)\\
    &=\frac{-\sum_{j<i}\alpha_{t,j}'}{\alpha_{t,i}}\log\Big(\frac{\sum_{x':c_{i-1}(x')=c_{i-1}(x)}p_\theta(x')}{\sum_{x'':c_{i}(x'')=c_{i}(x)}p_\theta(x'')}\Big)-\frac{\alpha_{t,i}'}{\alpha_{t,i}}
\end{align}

(iii) $i=n$ (namely $z_t=m$), only when $z_s=c_{n-1}(x)$ that the first term is nonzero, and only when $z'\in\{z'\in C_{n-1}:\Gamma_{n-1}\mathbf z'=\mathbf z_t\}$ the second term is nonzero, therefore
\begin{align}
    *&=\frac{-\sum_{j<n}\alpha_{t,j}'}{\alpha_{t,n-1}}\frac{\alpha_{t,n-1}}{\alpha_{t,n}}\log\Big(\frac{\alpha_{t,n-1}\sum_{x':c_{n-1}(x')=c_{n-1}(x)}p_\theta(x')}{\alpha_{t,n}\sum_{x'':c_{n}(x'')=c_{n}(x)}p_\theta(x'')}\frac{\alpha_{t,n}}{\alpha_{t,n-1}}\Big)\notag \\&\quad -\Big(\frac{-\sum_{j<n}\alpha_{t,n}'}{\alpha_{t,n-1}}\cdot \frac{\alpha_{t,n-1}}{\alpha_{t,n}}\Big)\\
    &=\frac{-\sum_{j<n}\alpha_{t,n}'}{\alpha_{t,n}}\log\Big(\frac{\sum_{x':c_{n-1}(x')=c_{n-1}(x)}p_\theta(x')}{\sum_{x'':c_{n}(x'')=c_{n}(x)}p_\theta(x'')}\Big)-\frac{\alpha_{t,n}'}{\alpha_{t,n}}
\end{align}
where we again leverage $-\sum_{j<n}\alpha_{t,j}'=\alpha_{t,n}'$.

Combining all the terms, we have the constant bias term
\begin{align}
    \mathbb E_{t,z_t}\sum_{i=0}^n \delta_{z_t, c_i(x)}\frac{-\alpha_{t,i}'}{\alpha_{t,i}}=-\sum_{i=0}^n \int_0^1 \alpha_{t,i}'\text{d}t=0
\end{align}
We therefore have the remaining weighted sum of cross-entropy losses:
\begin{equation}
    \log p(x)\geq \mathbb E_{t,z_t}\bigg[\sum_{i=1}^n \delta_{z_t, i}\frac{-\sum_{j<i}\alpha_{t,j}'}{\alpha_{t,i}}\log\Big(\frac{\sum_{x':c_{i-1}(x')=c_{i-1}(x)}p_\theta(x')}{\sum_{x'':c_{i}(x'')=c_{i}(x)}p_\theta(x'')}\Big)\bigg]
\end{equation}
\end{proof}
The result shows that the ELBO of arbitrary hierarchies has a clear interpretation: it can be decomposed into a weighted combination of cross-entropy losses on each hierarchy, where the model learns to decode a token on the $i$-th hierarchy into the correct detailed token in the $i-1$-th hierarchy. We believe this is a novel and highly nontrivial theoretical contribution which could inspire more future work.

\section{Experimental Details}\label{sec:exp_appendix}

\subsection{Experimental Settings}

We develop models based on the DiT architecture~\citep{dit} and use the GPT2~\citep{gpt2} tokenizer. We train two model sizes, consistent with previous research~\citep{gidd,simpleMDLM}: SMALL (containing 12 layers, 12 attention heads, and a hidden dimensions of 768, totaling 92.1M non-embedding parameters), and BASE (containing 24 layers, 16 attention heads, and a hidden dimensions of 1024, totaling 321.2M non-embedding parameters). 

Following the setting of~\citep{gidd}, all models are trained with a context size of 512 tokens and a batch size of 512 for 500k steps, resulting in a total of 131B training tokens. We utilize 8 NVIDIA A100/H100 80GB GPUs and employ mixed precision training in bf16 format.
For optimization, we use the Adam optimizer~\citep{adam} ($\beta=(0.9, 0.99)$, $\epsilon = 10^{-9}$) with a learning rate of $5 \times 10^{-4}$. The learning rate is warmed up linearly for the first 10k steps and then decayed using a cosine schedule to 10\% of the initial learning rate. We apply a weight decay of 0.02 and gradient clipping to a norm of 1.0. We clip the largest value of loss weights $w_{t,m}, w_{t,c}$ to 2.0 or 10.0 in training for stable optimization, yet do not clip while evaluating the elbo for fair comparison.

To ensure training stability, all denominators in the loss and ELBO weights were clipped to $1\times10^{-4}$. For sequences exceeding 512 tokens, we randomly selected a 512-token window. Shorter sequences were padded to 512 tokens; these padding tokens were included in the loss calculation but excluded from the ELBO.

For downstream performance evaluation, we follow the practice in GIDD~\citep{gidd} and use the \texttt{lm-eval-harness}\footnote{\url{https://github.com/EleutherAI/lm-evaluation-harness}.}. We incorporate a custom model that estimates per-token log-likelihoods using the ELBO. Our evaluation focuses on likelihood-based multiple-choice tasks, where the per-token likelihood is calculated across both the context and the completion, but not the padding.

However, it is remarkable that validation perplexity is not an effectiveness metric, as models with larger vocabulary sizes like ours naturally distribute probability mass across more tokens, leading to higher perplexity. Moreover, the elbo is only an upper bound of the exact likelihood, and our complex forward process potentially leads to loose bounds and thus not directly comparable. However, HDLM still achieve much lower perplexity than baselines with smaller vocabulary sizes, which further verify its superiority. Generative perplexity is also imperfect as it measures the alignment with the reference model, and a simple, repetitive sequence tend to have a small value. Despite these limitations, we still report Valid. PPL and Gen. PPL as common practice in recent literature. 

\subsection{Additional Results}

\paragraph{Ablation studies.} In the main text, we already provide the ablation results of cluster number $n$ and stochastic perturbation $\xi$. The results w.r.t. the forward process schedule $\gamma$ and training mode are further presented here.

\begin{table}[ht]
\begin{center}
\renewcommand{\arraystretch}{1.15}
\caption{Ablations of HDLM-$64$ with different schedules parameterized by $\gamma$. We report the validation perplexity and generative perplexity on OWT, and all models are trained for 131B tokens. Intuitively, larger $\gamma$ introduces harder forward processes with worse validation PPL correspondingly, but leads to better generation quality in full inference.}
\label{tab:ppl_ablation_gamma} 
\begin{tabular}{l|ccc}
\toprule
Model & $\gamma$  & Valid. PPL ($\downarrow$) & Gen. PPL ($\downarrow$)\\
\midrule
\multirow{3}{*}{\parbox{2.7cm}{HDLM-small-64}} & 1.0 & $\leq$23.36 & 144.2\\
& 2.0 & $\leq$25.01 & 138.3\\
& 3.0 & $\leq$130.32 & 135.9\\
\bottomrule
\end{tabular}
\end{center}
\end{table}

Reported in \Cref{tab:ppl_ablation_gamma} are the results for HDLMs with different forward process schedules controlled by $\gamma$. Intuitively, a larger $\gamma$ lead to a harder single-step denoising task for a given $t$, leading to larger validtion PPL. However, HDLM with larger $\gamma$ would decode less tokens in the initial stage and decode more tokens when the context becomes richer, which improves the generative perplexity (i.e., the real-inference performance is enhanced). This observation is consistent with other literature that decoding more tokens in the later stage (namely at small $t$) is helpful.

\begin{table}[ht]
\begin{center}
\renewcommand{\arraystretch}{1.15}
\caption{Ablations of HDLM-$64$ with different training strategies. We report the validation perplexity and generative perplexity on OWT, and all models are trained for 131B tokens.}
\label{tab:ppl_ablation_training} 
\begin{tabular}{l|ccc}
\toprule
Model & Train Mode & Valid. PPL ($\downarrow$) & Gen. PPL ($\downarrow$)\\
\midrule
\multirow{4}{*}{\parbox{2.7cm}{HDLM-small-64}} & - & $\leq$23.36 & 144.2\\
& Auxiliary & $\leq$24.36 & 151.9\\
& Hard & $\leq$25.38 & 151.9 \\
& Auxiliary+Hard & $\leq$25.21 & 143.2\\
\bottomrule
\end{tabular}
\end{center}
\end{table}

Results in \Cref{tab:ppl_ablation_training} suggest that the harder training task would slightly hurt the performance. This validates that the next semantic scale prediction paradigm of HDLM effectively decomposes the generation process into stages with balanced difficulty, bringing performance gain.

\paragraph{Results on LM1B.} We additionally pretrain our model on LM1B~\citep{lm1b}, a smaller dataset that is popular for text generation. Following \citep{simpleMDLM}, we set the sequence length to 128 but without sentence packing (as in \citep{gidd}). We train HDLM for 1M steps, resulting in 33B tokens. The constant learning rate is set to $3\times 10^{-4}$ with 2500 warmup steps, and other hyperparameters follow the settings in OWT pretraining.
As shown in \Cref{tab:ppl_lm1b}, HDLM is able to achieve better performance than SEDD and MDLM baselines. Notably, the improvements on OWT is more significant, suggesting the advantages of HDLM could be more obvious in difficult tasks.

\begin{table}[t]
\begin{center}
\renewcommand{\arraystretch}{1.15}
\caption{Performance on LM1B dataset.}
\label{tab:ppl_lm1b} 
\begin{tabular}{lc}
\toprule
Model & Valid. PPL ($\downarrow$) \\
\midrule
AR Transformer (33B tokens) & 22.32              \\
BERT-Mouth                  & $\leq$142.89       \\
D3PM (absorb)               & $\leq$76.90        \\
Diffusion-LM                & $\leq$118.62       \\
DiffusionBert               & $\leq$63.78        \\
SEDD (33B tokens)           & $\leq$32.79        \\
MDLM (33B tokens)           & $\leq$27.04        \\
HDLM-64 (33B tokens)        & $\leq$$\textbf{26.95}$ \\
\bottomrule
\end{tabular}
\vspace{-10pt}
\end{center}
\end{table}

\paragraph{Downstream evaluation.}

\begin{table}[ht]
\renewcommand{\arraystretch}{1.15}
\caption{Zero-shot benchmark accuracy on downstream datasets.}
\label{tab:downstream}
\resizebox{\textwidth}{!}{
\begin{tabular}{llcccccccc}
\toprule
Size & Model & Train. toks. & ARC-e & ARC-c & BoolQ & PIQA & OBQA & WinoG. & Avg. \\
\midrule
SMALL & GPT2 & unk. & \textbf{43.81} & 19.03 & 48.72 & \textbf{62.89} & 16.40  & \textbf{51.62} & \underline{40.41}\\
& Llama (retrain,) & 262B & \underline{40.53} & \underline{25.51} & 46.21 & \underline{62.73} & \underline{28.40} & 50.57 & \textbf{42.35}\\
& MDLM & 262B & 30.98 & 23.63 & \underline{50.52} & 54.13 & 28.00 & 49.41 & 39.44 \\
& GIDD (our eval impl.) & 131B & 26.73 & 24.83 & \textbf{51.10} & 51.85 & 27.20  & 49.49 & 38.53\\
& HDLM-4 (ours) & 131B & 27.15 & \textbf{26.62} & 50.43 & 52.50 & \textbf{30.00} & \underline{50.99} & 39.62\\
& HDLM-32 (ours) & 131B & 26.73 & 25.17 & 49.97 & 51.03 & 24.00 & \textbf{51.78} & 38.11\\
& HDLM-32-$\xi=0.9$ (ours) & 131B & 28.07 & \underline{25.94} & 50.61 & 53.21 & 27.00 & 51.07 & 39.32\\
\midrule
BASE & GIDD (our eval impl.) & 131B & 28.32 & 23.12 & 49.57  & 52.83 & 23.80 & 48.30 & 37.66\\
& HDLM-256 (ours) & 131B & 28.91 & 25.09 & 50.18 &  55.33 & 26.20 & 51.85 & 39.59\\
\bottomrule
\end{tabular}
}
\end{table}

We evaluate the general performance of our language model for language understanding based on the commonly adopted lm-eval evaluation suite. We follow the benchmark suite choice of \cite{gidd}, which cover a diverse range of tasks regarding general language understanding and question answering capability and can convincingly showcase the performance comparison among different models.
These include ARC~\cite{clark2018arc} (both elementary and challenge subsets), BoolQ~\cite{clark2019boolq}, 
HellaSwag~\cite{zellers2019hellaswag}, PIQA~\cite{bisk2019piqa},
OpenBookQA~\cite{mihaylov2018openbookqa}, and WinoGrande~\cite{sakaguchi2019winogrande}.

We demonstrate the results of this evaluation suite in~\Cref{tab:downstream}. For small and base model sizes, we report the performance of HDLM-4 and HDLM-256 respectively. Our small model achieves the best on two datasets and highly competitive results on others, even in comparison with autoregressive models. The average accuracy also outperforms other diffusion models even trained for twice as long, verifying the strong generalization power of HDLM. Our large model is better than the GIDD counterpart in every dataset. Interestingly, increasing model size does not necessarily improve the performance as observed by \cite{gidd}, which we partially attribute to overfitting the pretraining dataset. 
We also observe that with the perturbed forward process, the model with $\xi<1$ learns to distinguish inaccurate clusters in the denoising steps, which improves its self-correction ability. Compared to its counterpart whose $\xi=1$, HDLM-32 with $\xi=0.9$ is better in five out of six tasks, and is also superior in four tasks to HDLM-4, verifying the potential advantage of self-correction.

\section{Details of the Semantic Clustering Algorithm}\label{appendix_sec:clustering}

\subsection{Clustering algorithm of word embeddings}

As mentioned in the main text, the clusters of the word tokens can be learned jointly with the discrete diffusion model or predefined. In our implementation, we predefine the mapping given the number of clusters with an enhanced K-means clustering algorithm, which we refer as Semantic Clustering. The algorithm determine the clusters according to the token embeddings of a pretrained model, for instance, a pretrained GIDD model~\citep{gidd} which is more compatible with our discrete diffusion. Semantic Clustering consists of a K-means++ initialization, a dynamic cluster size control mechanism (to keep the sizes of the clusters relatively balanced up to a use-defined tolerance), as well as efficient cluster assignment and centroids updates. We adopt the converged assignment to determine our word-to-cluster mapping (i.e., $\Gamma$), and in addition take the centroids of clusters as the initializations of cluster token embeddings in HDLM. Detailed algorithms are described in Algorithm~\ref{alg:clustering_1} and Algorithm~\ref{alg:clustering_2}.

\begin{algorithm}\label{alg:clustering_1}
\caption{Semantic Clustering with Size Constraints (Part 1: Setup and initialization)}
\SetAlgoLined
\KwIn{$embeddings$, $cluster\_size$, $max\_iters$, $tolerance$, $min\_size\_ratio$, $max\_size\_ratio$, $use\_cosine$}
\KwOut{$cluster\_ids$, $centroids$}

$vocab\_size, embed\_dim \gets$ Dimensions of $embeddings$\;

\eIf{$use\_cosine = \texttt{True}$}{
    $normalized\_embeddings \gets embeddings / \|embeddings\|$ \tcp*{Normalize vectors}
}{
    $normalized\_embeddings \gets embeddings$\;
}

$avg\_cluster\_size \gets vocab\_size / cluster\_size$\;
$min\_cluster\_size \gets avg\_cluster\_size \times min\_size\_ratio$\;
$max\_cluster\_size \gets avg\_cluster\_size \times max\_size\_ratio$\;

\tcp{Initialize centroids using K-means++}
$centroids \gets \texttt{InitializeKMeansPlusPlus}(normalized\_embeddings, cluster\_size, use\_cosine)$\;

\SetKwProg{Fn}{Function}{:}{}
\Fn{\texttt{InitializeKMeansPlusPlus}($embeddings, cluster\_size, use\_cosine$)}{
    $centroids \gets \texttt{zeros}(cluster\_size, embed\_dim)$\;
    
    $first\_idx \gets \texttt{RandomInteger}(0, vocab\_size - 1)$\;
    $centroids[0] \gets embeddings[first\_idx]$\;
    \If{$use\_cosine = \texttt{True}$}{
        $centroids[0] \gets centroids[0] / \|centroids[0]\|$\;
    }
    
    \For{$k = 1$ \KwTo $cluster\_size - 1$}{
        \eIf{$use\_cosine = \texttt{True}$}{
            $similarities \gets embeddings \cdot centroids[:k]^T$\;
            $max\_similarities \gets \texttt{max}(similarities, \texttt{axis}=1)$\;
            $distances \gets 1 - max\_similarities$\;
        }{
            $distances \gets \texttt{ComputeMinDistances}(embeddings, centroids[:k])$\;
        }
        
        $probabilities \gets distances^2 / \sum distances^2$\;
        
        $next\_idx \gets \texttt{SampleFromDistribution}(probabilities)$\;
        $centroids[k] \gets embeddings[next\_idx]$\;
        \If{$use\_cosine = \texttt{True}$}{
            $centroids[k] \gets centroids[k] / \|centroids[k]\|$\;
        }
    }
    
    \Return{$centroids$}\;
}

\For{$iteration = 1$ \KwTo $max\_iters$}{
    \tcp{Assignment phase}
    \eIf{$use\_cosine = \texttt{True}$}{
        $affinities \gets \texttt{ComputeCosineSimilarities}(normalized\_embeddings, centroids)$\;
        $cluster\_ids \gets \texttt{argmax}(affinities, \texttt{axis}=1)$ \tcp*{Initial assignment}
    }{
        $distances \gets \texttt{ComputeEuclideanDistances}(normalized\_embeddings, centroids)$\;
        $cluster\_ids \gets \texttt{argmin}(distances, \texttt{axis}=1)$ \tcp*{Initial assignment}
    }
    
    $cluster\_sizes \gets \texttt{CountPointsInClusters}(cluster\_ids, cluster\_size)$\;
    
    \tcp{Continue in Part 2...}
}
\end{algorithm}

\begin{algorithm}\label{alg:clustering_2}
\caption{Semantic Clustering with Size Constraints (Part 2: Update and Size Constraints)}
\SetAlgoLined
\tcp{Continued from Part 1...}

\tcp{Handle oversized clusters}
$oversized\_clusters \gets \{k \mid cluster\_sizes[k] > max\_cluster\_size\}$\;
\ForEach{$cluster\_idx \in oversized\_clusters$}{
    $excess\_count \gets cluster\_sizes[cluster\_idx] - max\_cluster\_size$\;
    $points\_in\_cluster \gets \{i \mid cluster\_ids[i] = cluster\_idx\}$\;
    
    \ForEach{$point \in points\_in\_cluster$}{
        Calculate affinity/distance to current cluster\;
        Find second best cluster and its affinity/distance\;
        Calculate $reassignment\_cost$ for moving this point\;
    }
    
    $points\_to\_move \gets$ Select $excess\_count$ points with lowest $reassignment\_cost$\;
    \ForEach{$point \in points\_to\_move$}{
        Reassign $point$ to its second-best cluster\;
    }
}

\tcp{Handle undersized clusters}
$cluster\_sizes \gets \texttt{RecountClusters}(cluster\_ids, cluster\_size)$\;
$undersized\_clusters \gets \{k \mid cluster\_sizes[k] < min\_cluster\_size\}$\;
\ForEach{$cluster\_idx \in undersized\_clusters$}{
    $needed\_count \gets min\_cluster\_size - cluster\_sizes[cluster\_idx]$\;
    
    $candidate\_points \gets \{i \mid cluster\_ids[i] \neq cluster\_idx \text{ and has high affinity to } cluster\_idx\}$\;
    $points\_to\_move \gets$ Select $needed\_count$ best candidates from $candidate\_points$\;
    
    \ForEach{$point \in points\_to\_move$}{
        $cluster\_ids[point] \gets cluster\_idx$\;
    }
}

\tcp{Update centroids}
$new\_centroids \gets \texttt{zeros}(cluster\_size, embed\_dim)$\;
\For{$k = 0$ \KwTo $cluster\_size - 1$}{
    $cluster\_mask \gets \{i \mid cluster\_ids[i] = k\}$\;
    \If{$|cluster\_mask| > 0$}{
        \eIf{$use\_cosine = \texttt{True}$}{
            $mean\_vector \gets \texttt{mean}(normalized\_embeddings[cluster\_mask], \texttt{axis}=0)$\;
            $new\_centroids[k] \gets mean\_vector / \|mean\_vector\|$\;
        }{
            $new\_centroids[k] \gets \texttt{mean}(normalized\_embeddings[cluster\_mask], \texttt{axis}=0)$\;
        }
    }
}

\tcp{Check convergence}
\eIf{$use\_cosine = \texttt{True}$}{
    $centroid\_change \gets 1 - \texttt{mean}(\sum_{j=1}^{embed\_dim} centroids[i,j] \cdot new\_centroids[i,j])$\;
}{
    $centroid\_change \gets \|new\_centroids - centroids\|$\;
}

$centroids \gets new\_centroids$\;

\If{$centroid\_change < tolerance$}{
    \textbf{break} \tcp*{Converged}
}

Compute final statistics and coherence scores\;
\Return{$cluster\_ids, centroids$}\;
\end{algorithm}

\begin{table}[ht]
\renewcommand{\arraystretch}{1.15}
\caption{Example words of clusters. Shown is the $\#$cluster=32 example.}
\label{tab:cluster_appendix}
\resizebox{\textwidth}{!}{
\begin{small}
\begin{tabular}{c|c|p{10cm}}
\toprule
Cluster & Size & Example Words \\
\midrule
1 & 862 & aign,  York,  Washington,  San,  Calif,  California,  Tex,  London,  Texas,  college, ville,  football,  England, icago, eah,  Florida,  Chicago,  NFL,  Jose, ijuana,  Los,  Louis,  Angeles,  Carolina,  Virginia,  Francisco,  Boston,  Lake,  Toronto,  Penn\\
\midrule
2 & 2098 & ility, ution, ism, ness, formation, ience,  effect,  result, iness, ency,  interest, ality,  design,  view,  cost,  story,  claim, ization,  example, urity, idence,  reason, ulation, ability,  question,  love,  problem,  success, erence,  value,  experience,  mind,  future\\
\midrule
3 & 217 & 2016,  2015,  2014,  2017,  2013,  2012,  2011,  2018,  2010,  2008,  2009,  century,  2007, 2015,  decades,  2000,  2006, 2014,  2005, 2016,  2004, 2017,  decade,  2003,  2001,  2002,  1990, 2013, 2012,  1980,  1999,  1998, 2018,  1970,  1996,  1997,  1995\\
\midrule
4 & 776 & country,  Euro,  Europe,  North,  South,  countries,  German,  Israel,  China,  Japan,  Syri,  Russia,  Canada,  Russian,  European,  UK,  British,  English,  Muslim,  India,  Iraq,  Indian,  Chinese,  Iran, istan,  French,  Syria,  Germany,  Arab,  Australia\\
\midrule
5 & 1746 & ments,  system,  art,  things,  information, ways,  list,  data,  plan,  process,  program,  project,  money,  product, ories, itions,  event,  games,  access,  amount,  series, ources,  hours,  space,  period, ients,  item, ilities,  issues,  minutes,  changes,  results\\
\midrule
6 & 1886 & God,  character,  film,  King,  Dragon,  Star,  Super, craft,  battle, uild,  Earth,  Game,  Dark,  movie, ternal,  Lord,  enemy,  Rock,  spell,  Edition,  Cat,  Sky,  novel,  magic,  Dead,  Master,  god,  enemies,  Games,  Hell,  Death,  Battle, quest,  Magic,  Wars\\
\midrule
7 & 1381 & erson, ason,  Trump, oney, son, ohn, augh, ague, erman, inton,  Obama,  Clinton,  White,  Hill, ston, wood, ford, berg,  Rich,  Smith,  Brown, sey, zen, bert,  Johnson, kin, kins,  Fox,  Jones,  Bush, igan, rick, stein,  Lee,  Sanders, sen, mond,  Williams\\
\midrule
8 & 948 & Ed,  John,  Russ,  Mark,  David,  Paul,  Sam,  Jack,  James,  Donald,  Rob,  Michael,  Angel,  Ben,  Bill,  William,  Tom,  Thom,  Alex,  George,  Phil,  Matt,  Scott,  Lou,  Mike,  Tim,  Hillary,  Carol,  Chris,  Robert,  Frank,  Mary,  Jim,  Max,  Jeff,  Ann\\
\midrule
9 & 1790 & ited,  made,  used,  found,  read,  met,  told, ished,  got,  called, pped,  able, ived,  available,  took,  done,  seen,  thought,  given,  won,  known,  sent,  based,  asked,  started,  taken,  added,  reported, itted,  wanted,  saw,  began, overed,  needed,  lost\\
\midrule
10 & 1033 & also,  only,  again,  even,  now,  then,  very,  But, ually,  still,  though,  really,  fact, ically,  too,  never, ately, ably,  So,  always, ently,  already,  least,  later,  enough,  sure,  actually,  ever,  often,  course,  yet,  likely,  today, ially,  particular,  anything, though\\
\midrule 
11 & 1064 & war,  attack,  sex,  death,  milit,  cult,  demon,  terror,  arrest,  attacks,  Islam,  society,  violence,  prison,  ban,  crime,  peace, igration,  criminal,  intelligence,  assault,  politics,  marriage,  crisis,  abuse,  murder,  killing,  bomb,  gender, icide,  illegal\\
\midrule
12 & 2175 & Contact,  Content,  Truth,  Memory,  Thread,  Eye,  Info,  Quality,  Too,  Active,  Single,  Following,  Friends,  Ord,  Rules,  Drop, yrics,  Until,  Between,  Pred,  Rule,  Weight,  Comb,  Learn,  Inside,  Basic,  Site,  Sex,  Normal,  Upon,  Due,  Made\\ 
\midrule 
13 & 1764 & ear,  car,  leg,  bus,  camp,  cap,  book,  air,  dam,  land,  area,  net,  room,  object,  house,  field,  pub,  lab,  bar,  ground,  pack,  foot, ograph,  page,  gun,  port,  track,  heart,  ball,  building,  inside,  card,  base,  hands,  eyes,  road,  goal,  phone,  table,  box,  path,  pen\\
\midrule
14 & 1422 & 4, 5, 6, 7, 8, 9, 00, 01,  201,  19,  6,  7,  8, 000,  200, 10, 12,  12, 50, 19, 11, 20,  18, 201, 30, 15,  15,  11, 14, 16,  16,  14, 13, 25, 18,  13,  30, 17,  199,  17,  25, 24, 80\\
\midrule
15 & 2314 & H,  L,  E,  J,  K,  St,  Th,  Y,  Ch, irst,  Un,  Re,  Tr,  Wh,  Sh,  Ar,  Com,  New,  Al,  Se,  Le,  Cl,  Americ,  De,  An,  Bl,  Z,  Ad,  Fr,  Sp,  Pl,  Be,  Ph,  Ind,  Sc,  Or,  Con,  Comm,  Mar,  Dr,  Fl,  Col,  Sy,  Te,  Am,  Br,  Pr,  Pres,  Co,  Mr,  Su,  Tw,  Ob,  Im,  Min,  Man,  Res\\
\midrule
16 & 1628 & add,  see,  because,  make,  where,  against,  help,  follow,  find,  govern, ize,  during,  develop,  without,  allow,  using,  ask,  invest,  build,  including,  keep,  kill,  why,  tell,  give,  expect,  perform,  within,  making,  become,  having,  elect,  sever,  whether\\
\bottomrule
\end{tabular}
\end{small}
}
\end{table}

\begin{table}[ht]
\renewcommand{\arraystretch}{1.15}
\caption{Example words of clusters (Cont.). Shown is the $\#$cluster=32 example.}
\label{tab:cluster_appendix_cont}
\resizebox{\textwidth}{!}{
\begin{small}
\begin{tabular}{c|c|p{10cm}}
\toprule
Cluster & Size & Example Words \\
\midrule
17 & 1535 & go,  get,  play,  act,  bet,  look,  don,  think,  start,  count,  take,  going,  run,  lead,  pass,  turn,  something,  put,  might,  feel,  must,  come,  seem,  stand,  let,  rest,  pay,  didn,  return,  move,  pop,  talk,  until,  win,  doesn,  came,  happen,  sit,  appear,  grow\\
\midrule
18 & 1043 & K, M, N, Q, U, X,  C,  U,  V,  Q,  US,  X, AT, IS, AS, IT, IC, AC, ST, AM, OS, BC, AD, US, EC, AP,  AM,  PM,  IS, OP,  TV, BI,  AP, IG,  AN, IA,  ST, EM, FL, PS, OD, AA, RA, IM, II, IP, SA, CC, CT,  PC, SP, BA, BS, OC,  ID, NA, PA, GB, IR\\
\midrule
19 & 3630 & fter, ople, nder, ause, iew, ublic, ange, stem, vern, ollow, ween, alth, illion, ption, nce, ittle, chool, erest, lease, ording, chn, reen, arge, ww, ample, uro, iven, raph, ission, ember, orth, ajor, outh, ither, arent, ccess, nect, iversity, ideo, gin, ource\\
\midrule
20 & 1993 & clud,  includ,  differe,  requ,  somet, velop,  exper,  offic,  expl,  comple,  belie,  incre,  contin, riend,  commun,  adv, ruct,  beh,  ide,  polit, ploy, uthor,  chang,  charact,  techn,  rese, ocr,  econom,  orig,  polic,  creat,  ident,  invol,  rele, ortun,  viol,  sugg\\
\midrule
21 & 1205 & rub,  cann,  ast,  bull,  wal,  suc,  phen,  lit,  af,  vac,  upper,  cro,  sav,  lob,  root,  nav,  mand,  buff,  mo,  du,  flex,  medium,  gro,  stim,  pow,  ped,  pra,  ble,  bes,  tor,  pip,  sho,  CO,  hyd,  sup,  vent,  rum,  spr,  poly,  cock,  ju,  vo,  tort,  li,  pref,  mel,  blind,  sin\\
\midrule
22 & 1714 & app, irect,  import, ware,  htt,  type,  function,  version,  init,  file,  text,  code,  method,  log,  webs,  link, htt,  struct,  tool,  search,  Twitter,  Google,  email,  command,  Facebook, atform, (),  application,  print, coin,  computer,  software,  feature,  Windows\\
\midrule
23 & 2353 & Bas, Rad, Incre, Build, Team, Keep, Cap, etition, Real, Disc, Product, Que, King, Tra, Pan, Index, Sk, June, Bro, Looking, English, Vis, Intern, Field, Tor, Microsoft, Rober, Prov, Obama, Supp, Control, Sun, Home, Hub, Apple, Notes\\
\midrule
24 & 1217 & ational, ative, ular,  public, ural,  health, ional,  human, itional,  social,  local,  political,  individual,  anal, ederal,  custom,  national, aterial,  white,  military, ental,  energy, istic,  personal, atural,  online,  legal, ological, ancial,  private, sych,  civil\\
\midrule
25 & 1236 & ATES,  PAR, APTER,  DAM, URES,  VERY, AMES,  GOD,  WORLD,  HOW, LOC, ITS,  REL, VIS, AIN, ATIONAL, CLE, FINE,  BEL,  IMP,  SOU, PUT, ZZ, INC, IVERS,  OUR,  CLE,  LOT,  DOWN, TAIN, GER, ITIES, LED,  NET, STE\\
\midrule
26 & 1427 & report,  law,  state,  government,  Rep, ology, conom,  school,  United,  business,  season,  police,  tax,  official,  city,  press,  War,  State,  fund,  States,  market,  campaign,  study,  community,  media,  University,  Republic,  House,  service,  party\\
\midrule
27 & 2042 & very, round, day, self, the, view, ertain, stand, ready, for, not, pro, most, play, empt, urther, wh, year, put, now, minist, load, ext, like, cle, medi, su, time, key, face, rect, set, body, action, ruction, ground, order, down, hold, come, where\\
\midrule
28 & 1751 & much,  those,  such,  end, ting,  long, ef, ys, its, ax,  own,  set, ife, ble, ward,  show, ms, omet,  say, ts, ful, und, ren, cess, ince, tt, olog, up, ump,  last, ures, ars, ues, cy, io, hes, air, ier, read, ank, atch, ever,  point, ork, ool, alk, ement, ract,  same\\
\midrule
29 & 1717 & new,  good,  great,  different,  fun,  big,  direct,  better,  small, vious,  current, ailable,  large,  important,  clear,  certain,  possible,  proper,  complete,  young,  strong,  quick,  true,  bad,  similar,  entire,  specific,  previous,  recent,  significant,  difficult,  easy\\
\midrule
30 & 1381 & meric,  water, iol,  pot,  food,  wind,  pie,  drug,  blood,  oil,  gas,  brain,  fat,  gold,  birth,  drink,  skin,  cook,  disease,  chem,  plant,  species,  animals,  ice,  cells, cohol,  cancer,  heat,  drugs,  fuel,  diet,  alcohol,  coal,  marijuana, ffee,  dry,  fish,  egg,  temperature\\
\midrule
31 & 1308 & man,  people, ists,  person,  team,  child,  men,  women,  friend, ians,  family,  company,  children,  author,  intern,  others,  members,  president,  players,  former,  President,  mom,  someone,  player,  cop,  woman,  students,  exec,  friends,  everyone,  parent\\
\midrule
32 & 1611 &  !, \#, \$, (, ),  *, +, ;, <, =, >, ?, @, [, ], \_, `, |,  ~, --, .., .,  -, www, http, HAHA \\ 
\bottomrule
\end{tabular}
\end{small}
}
\end{table}

\subsection{Examples of Word Clusters}

We experiment with $\#{\rm clusters}=\{2,4,8,16,32,64,128,256\}$, with pretrained GIDD model of both base and small sizes. The clustering algorithm generally converges in 1 minute. The overall semantic coherence lies between $0.3$ to $0.4$ for most settings, suggesting a reasonable correlation within the clusters (given the large vocabulary size and the high-dimensional nature of the word embeddings). The cluster sizes and example words of the clustering for $\#{\rm clusters}=32$ and GIDD-small is shown in \Cref{tab:cluster_appendix} and \Cref{tab:cluster_appendix_cont} for reference, from which we can validate the effectiveness of the algorithm.

\subsection{Discussions}
As one could imagine, the quality of the clustering method has considerable effect on the model performance. Hence, more sophisticated algorithm to predefine the hierarchical mapping is worth exploring, which is left as future work. The mapping function can even be extended to stochastic or implicit. Learning the hierarchies jointly with discrete diffusion training is also a feasible solution, where we could represent sets with deep models such as DeepSets~\citep{deepsets}. Hierarchical clustering also provides an additional perspective for introducing multiple hierarchies into HDLM. We can show that our theoretical framework still applies to this case, and HDLM can potentially further benefit from more carefully designed or well-trained hierarchies without introducing computation overheads in practical training.

\clearpage
\newpage
\section{Unconditional Generation}

We finally provide some examples of unconditional generation. We choose our HDLM-128-base model, and use the reverse process as described in the main text.

\begin{table}[h]
\begin{tabular}{|c|p{10cm}|}
\hline
Sampling Steps & Contents \\
\hline
512 &  It seems that HTC (TM) is handing a part responsibility to build Smart Device Projects to Qualcomm, concluding the giant former handset partnership, and the 24 Samsung Gear in Japan ordering ESPA+.Check out: Samsung TV with P35 and Sony Glass? Rooster. Samsung aforementioned Desire 1520 was launched last month, and Samsung is now using the LG Desire moniker, the only company to come in its first series of ARM chips powering the 1520 HDD screen. However, a well estimate of the upcoming Gear tells the Beyond Android forums that Samsung users will get access to its Mobile Gear program with each monthly purchase, perhaps it an acknowledgement to its ad-init nature of SME lock, and that Samsung is already shooting a track for further marketing after the device has introduced. only started engineers here, next steps are in Desire 1520 and not in the HTC board. Sale pointed its backers, working on the project, much more up the supply chain, this with the plans for Galaxy launch in 2016 are taken seriously in pre-2015 where Samsung refers a higher end hardware pack of chips to see Wireless Design tests on touchscreen tablets. and their market still beyond the successor, all Galaxy Note 3.
Many were surprised that Samsung will not pursue Mobile Fusion during the coming initiative and Strategy period, perhaps because of the flexibility of bringing in such slab of HGDDR, where development is under pressure from certain suppliers. Native research laboratories within the companies would vary some be al possible which manufacturer and OEM has the best push to stay in research lane.
\\
\hline
1024 &
It opened so slowly, revealing that I am an angry, serious person. I will be typing to end "need you to send me a wallally quick enough? Hopefully it can bring people tears and hurt" psuedo for the real impacts of a heavy hand on my feelings. Unfortunately, I will often be sent a truly open letter long time surprisingly few people ever answer it and take it seriously. Nor besides people tell me I am doing anything wrong. Do you ever sit assured where I would think later? (mathing) Yes, I am talking to you for the first time. I believe it too fast. Mingya Dear blood. Why did you have to read this written hard time? In a desimimized way, you got beaten 3-0 by drinking alcohol against a Zerg strager in our fan club where you now yours. It and the irony is a bit unusual, so many times, I would not check Reddit and accept I was on the other side. In fact, I used to get that kind of Ape feel pulling to let me know. But not now. We are outside of Korea where players do not even know that violence punishment breaking during the game saying Thanks, you can never expose your friends or people to parasitic infections of said hate. I tried somewhere there is that sort of vi-Re where agents fight like "in takes the bullet" and hits back groups but now this Battle is that same kind of place. MTSong: Thanks to my close friends as my translator. I will see you soon. Zona Princess -uncredential -talk -beautifully Braid: When Baran's fan got forthright about your backstabbing and he tried to insult you in intimidation, what made you realize last minute? \\




\hline
\end{tabular}
\end{table}

\begin{tabular}{|c|p{10cm}|}
\hline
Sampling Steps & Contents \\
\hline
1024 & Almost literally mission based on defying celestial navigation rules making observer looking at objects from 3 bright wavelengths by a century convention. Think of this image as a Gaussian filter, a blur between Galileo and Zeeb, wondering what the objects are going through from the point down. Its stray segments, orbits, orbital motions, unique orbital lengths, all normal events that seem impossible for astronomers to predict. This is just the way the Discovery team wanted the modern astronomy to look, where even the telescope looks up at the sky as it pummels it instead of measuring a star. These objectologist smell like liar on the chaos and like cheat astronomer.
So we know the test that followed a unionized framework at Southwest Research Division focused on sending up the groundwork to get a reality instrument to the ground that could in a decade or more better upgrade the world  astronomical tools. But they did so largely by accident.
The Milky Way provided the Planetary Systemector naked and sent tonnes messages through the solar system, all along way) to the moon. This culture has extended creatures experienced from dusty cloud lifeworm, Cream Dwarf B (or or B or O) to the solar system forgotten white dwarf, Star Power Letter Flake Feeder. Consequently, the helloubating, a deafening cheer, this message was focused on categorizing the minor planets on earth. Somewhere near the gastrarian referred to the message as "Malmy" and running that is like if Lady Orr opens whats away message she has written asking, Everybody belong, go spaceship, I'm working along the planetary science studying various planets on this planet around me, the resulting announcement was useless for most of the carrier. The remaining things just were moved to stop others from moving around before they heard the signals. Everybody's RB, the broadcast is not a observatory to reach for aliens, we're thinking more about it and the message's message seemed supposed to be released to verify a alien's structure via the 1996 SETI conference. Source: Astrophysicinos Gramento de Mancera, et al.- PSA Space Center. G flipped from the corrected message box shows GAnd literally, the probe was supposed to give us information. The entire experimentation mechanism that processed the message was immediately sent back to the NASA Data Center. It would be grand to know the "Daily message box" was off a hoax that this scoring point from a culture.\\
\hline
1024 & One sunny Sunday we ventured to Montreal from over Britannia: when we finally arrived in town the talk about the way of the other day on the rink was the focus. The couch was mixing pie, savanna and chocolate pizza. I was eating bargain thin desserts in my almost-Frances de Don diet on Sunday morning. Now instead of getting homage at the rink, I was with a milkshake hint in the form of caramel cookie baked and deep-fried corn.
There were four friends at work at a restaurant. Teddy, my wife and I left sans bread but a table great full house full of work and we leaned in at scrambling. I provide replacements for these-u.do.-e-but-very-krelli-moress, despite Fireade bacon and cheese reheat too far through the crust when we baked the crack. Given the size of my family, my parents stepped in to buy the cookies, break something about baking. Not only that, but we spilt spaghetti-drambled aore through the flame crust and the best bill cooled and sheathed.
The candy is sensitive, the bread is thin and it was wrong, it worked. Toated it with my hand. Shut the doors and chew with my fingers. Iated a tail. To the couch while the kitchen pocked and, sure enough, what I thought was a pot was the grilled cheese. Only to be wrong. The cheese had become too dense with grease.\\
\hline
\end{tabular}

\end{document}